\newtheorem{theorem}{Theorem}
\newtheorem{proposition}{Proposition}
\newtheorem{lemma}{Lemma}
\newtheorem{definition}{Definition}
\newtheorem{corollary}{Corollary}
\newtheorem{remark}{Remark}
\def\DiffMc{\text{Diff}_c(\mathcal{M})}
\def\DiffM{\text{Diff}(\mathcal{M})}
\def\M{\mathcal{M}}
\def\Cc{\mathcal{C}^\infty_c(\mathcal{M})}
\def\CcV{\mathcal{C}^\infty_c(\mathcal{M},T\mathcal{M})}
\def\Lp{L^p_\omega(\mathcal{M,\mathbb{R}})}
\def\LpV{L^p_\omega(\mathcal{M},T\mathcal{M})}
\def\O{\mathcal{O}_1}
\def\Op{\dot{\mathcal{O}_1}}
 \let\Ginclude@graphics\@org@Ginclude@graphics
\title{On Non-Linear operators for Geometric Deep Learning}
\author{%
  Gr\'egoire Sergeant-Perthuis\\
  Univ. Artois, UR 2462, Laboratoire de\\
  Math\'ematiques de Lens (LML)\\
  F-62300 Lens, France\\
  \& OURAGAN team, Inria Paris \& IMJ-PRG\\ Paris, France.\\
  \texttt{gregoireserper@gmail.com}
  \And 
  Jakob Maier\\
  INRIA, DI/ENS, PSL\\
  Paris
  \And
  Joan Bruna\\
  Courant Institute of Mathematical Sciences\\
  New York University\\
New York
\And 
Edouard Oyallon\\
 MLIA - Machine Learning \\
 and Information Access\\
Sorbonne Université, CNRS,\\
ISIR, F-75005 Paris, France
}
\begin{document}

\maketitle

\begin{abstract}%
  This work studies operators mapping vector and scalar fields defined over a manifold $\M$, and which commute with its group of diffeomorphisms $\DiffM$. We prove that in the case of scalar fields $\Lp$, those operators correspond to point-wise non-linearities, recovering and extending known results on $\mathbb{R}^d$. In the context of Neural Networks defined over $\M$, it indicates that point-wise non-linear operators are the only universal family that commutes with any group of symmetries, and justifies their systematic use in combination with dedicated linear operators commuting with specific symmetries. In the case of vector fields $\LpV$, we show that those operators are solely the scalar multiplication. It indicates that $\DiffM$ is too rich and that there is no universal class of non-linear operators to motivate the design of Neural Networks over the symmetries of $\M$.
\end{abstract}

\section{Introduction}

  Given a physical domain $\M$ and measurements $f:\M \to \mathcal{Y}$ observed over it, one is often interested in processing \emph{intrinsic} information from $f$, i.e. consistent with the \emph{symmetries} of the domain. {\color{black} Let $M$ denote an operator, it can be seen as a non-linear operator acting on measurements.} In words, if two measurements $f$, $\tilde{f}=g.f$ are related by a symmetry $g$ of the domain, like a rigid motion on an observed molecular compound, we would like our processed data $M(f)$ and $M(\tilde{f})$ to be related by the same symmetry --- thus that $M(g.f)=g.M(f)$ or equivalently that $M$ commutes with the symmetry transformation of the domain. 
  The study of operators that satisfy such symmetry constraints has played a long and central role in the history of physics and mathematics, motivated by the inherent symmetries of physical laws. More recently, such importance  has also extended to the design of machine learning systems, where symmetries improve the sample complexity ~\cite{mei2021learning,bietti2021sample}. For instance, Convolutional Neural Networks build translation symmetry, whereas Graph Neural Networks build permutation symmetry, amongst other examples coined under the `Geometric Deep Learning' umbrella \cite{bronstein2017geometric,bronstein2021geometric}.

 Lie groups of transformations are of particular interest, because there exists a precise and systematic framework to build such intrinsic operators.  Indeed, for a locally compact group $G$, it is possible to define a Haar measure  which is invariant to the action of $G$~\cite{bandt1983metric}; then a simple filtering along the orbit of $G$ allows to define a class of \emph{linear} operators that commute with the group action. Examples of locally compact groups are given by specific Lie groups acting on $\mathbb{R}^d$, such as the translations or the rotations $O_d(\mathbb{R})$. Often these  Lie groups $G$ only act on a manifold $\M$, and one tries to average along the orbit induced by $G$.  Note that it is possible, beyond invariance, to linearize more complex groups of variability like diffeomorphisms $\DiffM$~\cite{bruna2013invariant}.
 
 While the description of such linear intrinsic structures is of central mathematical importance and forms the basis of Representation theory~\cite{ping2002group}, in itself is not sufficient to bear fruit in the context of Representation \emph{learning} using Neural Networks~\cite{cohen2014learning}. Indeed, linear operators do not have the capacity to extract rich information needed to solve challenging high-dimensional learning problems. It is therefore necessary to extend the systematic construction and classification of intrinsic operators to the non-linear case.

With that purpose in mind, our work aims at studying the class of (\textit{non-linear}) operators $M$ which commute with the action of the  group $\DiffM$, the diffeomorphisms over $\M$. This approach will lead to a natural class of non-linear intrinsic operators. Indeed, any group $G$ of symmetries is, by definition, a subgroup of $\DiffM$, and thus commutes with such $M$~\cite{mallat2016understanding}. Consequently, obtaining a non-linear invariant to a symmetry group $G$ could be done by using a cascade of interlacing non-linear operators which commute with $\DiffM$ and linear operators which commute with $G$. \\
A notable example of linear operators that are covariant to the Lie group of translations is a given by the convolutions along the orbit of the group. These can be  constructed thanks to the canonical Haar measure~\cite{sugiura1990unitary}. However, such an approach fails for infinite dimensional groups, like our object of interest: contrary to Lie groups, $\DiffM$ is not locally compact and it is thus not possible to define a Haar measure on this group.

Our first contribution is to demonstrate that the \textit{non-linear} operators which act on vector fields (elements of $\LpV$) and which commute with the group of diffeomorphisms, are actually just scalar multiplications. This implies that $\DiffM$ is too rich to obtain non-trivial operators. Our second contribution is to demonstrate that \textit{non-linear} operators acting on signals in $\Lp$ are pointwise non-linearities. This fills a gap in the results of \cite{bruna2013invariant}, and \emph{a fortiori} justifies the use of point-wise non-linearities in geometric Deep Learning~\cite{bronstein2021geometric}.

{\color{black} Let us remark that the study of equivariant operators that take as input vector fields is motivated by the use of Neural Networks in physics, in particular for dynamical systems such as fluid dynamics \cite{doi:10.1146/annurev-fluid-010719-060214}. For example, one subject of interest in hydrodynamics is how a vector field of velocities evolves; the time evolution of such field is described by a partial differential equation (PDE), the Navier-Stokes equations, in which Neural Networks found recent applications and it is more generally the case of other PDE \cite{RAISSI2019686}.}

Our paper is structured as follows:  Sec. \ref{motivation} introduces the necessary formalism, 
 that we use through this paper: in particular, we formally define the action of diffeomorphism. Then, we state and discuss our theorems in Sec. \ref{theoremstate} and sketch their proofs in Sec. \ref{sketch}. Rigorous proofs of each statement can be found in the Appendix.

\section{Problem Setup\label{motivation}}

\subsection{Related work and motivation}

In this section, we discuss the notion of intrinsic operators, invariant and covariant non-linear operators and linear representation over standard symetry groups. Then, we formally state our objective.

\paragraph{Intrinsic Operators}
As discussed above, in this work we are interested in \emph{intrinsic} operators $M:L^p(\mathcal{M}, E) \to L^p(\mathcal{M}, E)$, where $\mathcal{M}$ is a Riemannian manifold, and $E=\mathbb{R}$ or $E=T\mathcal{M}$, capturing respectively the setting of scalar signals and vector fields over $\mathcal{M}$. {\color{black} $L^p(\mathcal{M}, \mathbb{R})$ is the space of scalar function $f:\mathcal{M}\to \mathbb{R}$ which $p$-th power is integrable, similarly $L^p(\mathcal{M}, T\mathcal{M})$ is the space of sections of the tangent bundle of $\mathcal{M}$ (denoted $T\mathcal{M}$), $f: \mathcal{M}\to T\mathcal{M}$, which norm $\Vert f\Vert : \mathcal{M} \to \mathbb{R}$ is in $L^p(\mathcal{M}, \mathbb{R})$.} Here the notion of `intrinsic' means that $M$ is consistent with an equivalence class induced by a symmetry group $G$ in $L^p(\mathcal{M}, E)$: if $f,\tilde{f} \in L^p(\mathcal{M}, E)$ are related by a transformation $g \in G$ (in which case we write $f =g. \tilde{f}$),  then $M(f) =g. M(\tilde{f})$.
Naturally, a stronger equivalence class imposes a stronger requirement towards $M$, and consequently restrains the complexity of $M$. We  now describe the plausible techniques used to design such operators $M$.

\paragraph{GM-Convolutions}
The notion of $GM$-convolutions \cite{weiler2021coordinate} is an example of linear covariant operators which commute with the reparametrization of a manifold. In practice, this implies that the weights of a $GM$-convolution are shared and the action of $GM$-convolutions is local -- two properties that facilitate implementation and point out the similarity with Lie groups. Another example of symmetry group corresponds to the isometry group of a Riemaniann manifold, whose pushforward preserves the tensor metric. In this case, it is well known that isometries \cite{watson1973manifold} are the only diffeomomorphism which commute with a manifold Laplacian. Thus, any \textit{linear} operators which commute with isometries is stabilized by Laplacian's eigenspaces. However, little is known on the \textit{non-linear} counterpart of the symmetry-covariant operators. In this work, we characterize \textit{non-linear} operators which commute with $\DiffM$. We will see that such operators are intrinsically defined by $\DiffM$ and could be combined with any linear operators covariant with a symmetry group $G$.

\paragraph{Non-linear operators} It has been shown that Convolutional Neural Networks  are dense in the set of \textit{non-linear} covariant operators~\cite{yarotsky2022universal}. The recipe of the corresponding proof is an extension of the proof of the universal approximation theorem~\cite{cybenko1989approximation}. The Scattering Transform~\cite{bruna2013scattering,mallat2012group} is also an example of a well-understood non-linear operator which corresponds to a cascade of complex wavelet transforms followed by a point-wise modulus non-linearity. This representation provably linearizes small deformations.

\paragraph{Compact Lie Groups}
In the context of geometric Machine Learning~\cite{bronstein2017geometric}, there are several relevant notions of equivalence. For instance, we can consider a compact Lie Group $G$ acting on $\M$, and an associated representation in $\mathcal{F}=\{f: \M \to \mathbb{R}\}$: Given $g \in G$ and $f \in \mathcal{F}$, then $g.f(x)\triangleq f(g^{-1}.x)$ for $x \in \M$. We then consider $f \sim \tilde{f}$, related by this group action: $\tilde{f}=g.f$ for some $g \in G$. The operators $M$ which are compatible with such group action are referred as being $G$-equivariant (or covariant to the action of $G$) in the ML literature \cite{cohen2016group,bronstein2021geometric}. Such groups are typically of finite and small dimension, e.g. the Euclidean transformations of $\mathcal{M}=\mathbb{R}^d$, with $d=2$ for computer vision applications, or $d=3$ for computational biology/chemistry applications. In this case, it is possible to characterize all \emph{linear} intrinsic operators $M$ as group convolutions \cite{kondor2018generalization}, leading to a rich family of non-linear intrinsic operators by composing such group convolutions with element-wise non-linear operators, as implemented in modern Neural Networks. We highlight that stability to symetries via non-linear operators finds useful application, in particular for flat manifolds \cite{bruna2013invariant}.

\paragraph{Isometries} 
Riemanian manifolds $\mathcal{M}$ come with a default equivalence class, which is given by isometries. {\color{black} $T_u\mathcal{M}$ denotes the tangent vector space of $\mathcal{M}$ at point $u\in \mathcal{M}$.} If $m_u: T_u\mathcal{M} \times T_u\mathcal{M} \to \mathbb{R}$ denotes the Riemannian metric tensor at point $u\in \mathcal{M}$, a diffeomorphism $\psi: \mathcal{M} \to \mathcal{M}$ is an isometry if $g_u( v, w) = g_{\psi(u)}( d\psi_u(v), d\psi_u(w) )$ for any $u \in \mathcal{M}$ and $v, w \in T_u \mathcal{M}$. In words, isometries are changes of variables that preserve the local distances in the domain. The ensemble of all isometries forms a Lie Group which is locally compact \cite{myers1939group}. In this case, one can also build a rich class of intrinsic operators by following the previously explained `blueprint', namely composing linear intrinsic operators with element-wise non-linearities. As a representative example, the Laplace-Beltrami operator of $\mathcal{M}$ only depends on intrinsic metric properties \cite{watson1973manifold}: as said above, isometries preserve the invariant subspaces of a Laplacian.

\paragraph{Beyond Isometries}
While isometries are the `natural' transformations of the geometric domain, they cannot express high-dimensional sources of variability; indeed, if $\M$ is a $d$-dimensional complete connected Riemannian manifold, its isometry group has dimension at most $d(d+1)/2$ \cite{chen2010riemannian}. This raises the question whether one can characterize intrinsic operators relative to a broader class of transformations. Another class of important symmetries corresponds to the ones which are gauge invariant, i.e. which leads to transformations which preserve the change of parametrization and which are used in \cite{cohen2019gauge,weiler2021coordinate} through the notion of $G$-structure.

In this work, we consider the class of transformations given by $\DiffM$, the diffeomorphisms over $\M$. 
As shown in the Appendix, compactly supported deformations $\psi:\M \to \M$ define bounded linear operators $L_\psi$ acting on $L^p(\M,E) \to L^p(\M,E)$, and constitute a far broader class of transformations than isometries. Our proof is mainly based on the use of compactly supported diffeomorphisms.

Our objective is to characterize the (non-linear) operators $M$ such that
$$\forall \phi \in \DiffM,
L_\phi M=ML_\phi\,.
$$
In other words, we aim to understand continuous operators $M$ that commute with deformations.  We will show that such operators are act locally and that they can be descriped explicitly, with simple formula. The commutation condition is visualized in the following diagram:
$$  \xymatrix{
    \def\commutatif{\ar@{}[rd]|{\circlearrowleft}}
    f \ar[r]^{L_{\phi}} \ar[d]^M \commutatif  & g \ar[d]^M \\
    M f \ar[r]^{L_{\phi}} & M g
  }$$
\subsection{Notations}\label{notations}
We will now formally introduce the mathematical objects of interest in this document. Let $(\mathcal{M},g)$ be an orientable, connected, Riemannian manifold, of finite dimension $d\in \mathbb{N}^*${\color{black}. Let $T\mathcal{M}$ denote the tangent bundle of $\mathcal{M}$, i.e. the union of tangent spaces at points $u\in \mathcal{M}$. $T^*\mathcal{M}$ is the cotangent bundle of $\mathcal{M}$.} {\color{black} $g\in \Gamma( T^*M \otimes T^* M)$ is a section of symmetric definite positive bilinear forms on the tangent bundle of $M$. It is common to denote $\Gamma B$ the collection of sections of a bundle $B$; $\bigwedge^n T^*M$ for $n\leq d$ is the bundle of $n$-linear alternated forms of $\mathcal{M}$, and $\Gamma(\bigwedge^n T^*M)$ is the space of section of this vector bundle over $\mathcal{M}$.}

{\color{black} For $A\subseteq \mathcal{M}$, we denote $\overline{A}$ its closure; $1_A$ is the indicator function of $A$, i.e. which takes value $1$ if $x\in A$ and $0$ otherwise. $\mathcal{B}(u,r)$ denotes the ball of radius $r$ around $u\in \mathcal{M}$. Any two vectors $v,v_1\in V$ in a pre-Hilbert space (with a scalar product $\langle, \rangle$) are orthogonal, denoted $v\perp v_1$, when $\langle v,v_1\rangle =0$.}

Fix $p\in [1,+\infty[$. {\color{black} Any volume form $\omega\in \Gamma( \bigwedge^d T^*M)$ defines a (positive) measure on the orientable Riemannian manifold $\mathcal{M}$; the total volume of $\mathcal{M}$ is $\omega(\mathcal{M}):= \int_{\mathcal{M}} 1 d\omega$.} Let us define $\LpV$, the space of $L^p$ vector fields, defined as the subspace of measurable functions $f:\M\to T\M$ such that $f({\color{black}u})\in T_{{\color{black}u}}M$ {\color{black}almost everywhere} and

\begin{equation}
\Vert f \Vert_p^p \triangleq \int_{{\color{black}u}\in \mathcal M} g_{{\color{black}u}}(f({\color{black}u}),f({\color{black}u}))^{\frac p2}\,d\omega(x) <+\infty\,.
\end{equation}

We will also consider $\Lp$ the space of measurable scalar functions {\color{black}(fields)} $f:\mathcal M\rightarrow \mathbb{R}$ that fulfill

\begin{equation}
    \Vert f\Vert_p^p\triangleq \int_{{\color{black}u}\in \mathcal M} |f({\color{black}u})|^p\,d\omega({\color{black}u})<+\infty\,.
\end{equation}

We may write $\Vert \cdot \Vert$ instead of $\Vert \cdot \Vert_p$ when there is no ambiguity. For a $C^\infty$ diffeomorphism $\phi\in \DiffM$, we will consider the action of $L_\phi:\LpV\rightarrow \LpV$ which we define for for any $f\in \Lp$ as
\[L_\phi f(u)\triangleq d\phi(  u)^{-1}.f(\phi(u))\,.\]
 Note that this action is contravariant:
$$
L_{\psi \circ \phi}f(u)=d(\psi\circ \phi)^{-1}.f(\psi \circ \phi(u))=L_\phi L_\psi f(u)
$$
For scalar function $f\in \Lp$, we define the action of $\phi$ via
\[L_\phi f(u)\triangleq f(\phi(u))\,.\]

{\color{black} Let $A$ be a measurable set of $\mathcal{M}$ and $f\in L^p(\mathcal{M}, E)$, $f1_A$ is the product of $f$ with $1_A$, i.e. $f1_A$ is equal to $f$ on $A$ and $0$ elsewhere. In what follows we introduce 'constant' fields over an open set, they are denoted $c1_U$ with $U$ an open subset of $\mathcal{M}$. For scalar fields, a 'constant' scalar field $f(u)$ is equal to the same constant $c\in \mathbb{R}$ for any $u\in U$. On the other hand, 'constant' vector fields $f1_U$ are vector fields over $U$ for which there is a chart from $U$ to an open subset of $\mathbb{R}^d$, in which for any $u\in U$ $f(u)$ is equal to a constant vector $c\in \mathbb{R}^d$; in the vector case we say that the vector field $f1_U$ can be straightened.}

This latter operator is also contravariant. If there is no ambiguity, we will use the same notation $L_\phi$, whether we apply it to $\Lp$ or $\LpV$. {\color{black} We might sometimes refer to $\Lp$ or $\LpV$ as $L^p(\mathcal{M}, \mathbb{R})$ or $L^p(\mathcal{M}, T\mathcal{M})$.} Throughout the article we restrict ourselves to $\phi$ such that $L_\phi$ is a bounded operator. Write $\text{supp}(\phi)=\{{\color{black}u}, \phi({\color{black}u})\neq {\color{black}u}\}$ for the support of $\phi$ and say that $\phi$ has a compact support if $\text{supp}(\phi)$ is compact. We denote by $\DiffMc\subset\DiffM$ the set of compactly supported diffeomorphisms. Recall that since $\M$ is second-countable, $\Cc$ is dense in $\Lp$ and $\CcV$ is dense in $\LpV$. Finally, denote by $O_d(\mathbb{R})$ the set of unitary operators on $\mathbb{R}^d$. Throughout the article, we might not write explicitly that equalities hold {\color{black}almost everywhere}, since this is the default in $L^p$ spaces.\\
As mentioned earlier, compactly supported diffeomorphisms lead to continuous operators, which is made rigorous by the following lemma whose proof is in the appendix.

\begin{lemma} \label{compact-support}If $\text{supp}(\phi)$ is compact, then $L_\phi$ is bounded.\end{lemma}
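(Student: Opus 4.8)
The plan is to prove the operator-norm bound $\|L_\phi\| < \infty$ directly, handling the scalar case on $\Lp$ first and then bootstrapping to the vector case on $\LpV$ (recall $L_\phi$ is used for both). Set $K \triangleq \overline{\text{supp}(\phi)}$, which is compact by hypothesis. Since $\phi$ is the identity on the open set $\M \setminus \text{supp}(\phi)$, one checks $\text{supp}(\phi^{-1}) = \text{supp}(\phi)$, so $\phi^{-1}$ is compactly supported too, and both $d\phi$ and $d\phi^{-1}$ equal the identity outside $K$.

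For the scalar case, $\|L_\phi f\|_p^p = \int_\M |f(\phi(u))|^p\, d\omega(u)$. Writing $\phi^*\omega = J_\phi\,\omega$ for the smooth, nowhere-vanishing Jacobian density of $\phi$ relative to $\omega$, the change-of-variables formula — valid for all nonnegative measurable $h$ by monotone convergence — gives $\int_\M (h\circ\phi)\, d\omega = \int_\M h\cdot |J_{\phi^{-1}}|\, d\omega$, where $J_{\phi^{-1}} = |J_\phi|^{-1}\!\circ\phi^{-1}$ is smooth, strictly positive, and equals $1$ on $\M\setminus K$. Being continuous on the compact set $K$, $|J_{\phi^{-1}}|$ is bounded there, so $C_\phi \triangleq \sup_\M |J_{\phi^{-1}}| < \infty$. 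Taking $h = |f|^p$ yields $\|L_\phi f\|_p^p \le C_\phi\,\|f\|_p^p$, so $L_\phi$ is bounded on $\Lp$ with $\|L_\phi\| \le C_\phi^{1/p}$.

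For the vector case, $L_\phi f(u) = d\phi(u)^{-1}.f(\phi(u))$, so $\|L_\phi f\|_p^p = \int_\M g_u\big(d\phi(u)^{-1}f(\phi(u)),\, d\phi(u)^{-1}f(\phi(u))\big)^{p/2}\, d\omega(u)$. Let $\rho(u)$ be the operator norm of $d\phi(u)^{-1}\colon (T_{\phi(u)}\M,g_{\phi(u)}) \to (T_u\M,g_u)$. In a local atlas the map $u\mapsto d\phi(u)$ and the metric coefficients are smooth, hence $\rho$ is continuous on $\M$; moreover $\rho \equiv 1$ on $\M\setminus K$, so $C_1 \triangleq \sup_\M \rho < \infty$. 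Since $\|d\phi(u)^{-1}w\|_{g_u} \le \rho(u)\,\|w\|_{g_{\phi(u)}}$ for $w\in T_{\phi(u)}\M$, we obtain $\|L_\phi f\|_p^p \le C_1^p \int_\M \big(\|f\|^p\circ\phi\big)\, d\omega$, where $\|f\|\colon v\mapsto g_v(f(v),f(v))^{1/2}$ belongs to $\Lp$. Applying the scalar change-of-variables bound of the previous paragraph to the nonnegative function $\|f\|^p$ gives $\int_\M(\|f\|^p\circ\phi)\,d\omega \le C_\phi\,\|f\|_p^p$, whence $\|L_\phi f\|_p \le (C_1^p C_\phi)^{1/p}\,\|f\|_p$ and $L_\phi$ is bounded on $\LpV$.

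The one genuinely delicate step is the global boundedness of the two geometric multipliers — the inverse Jacobian density $|J_{\phi^{-1}}|$ in the scalar case and the operator norm $\rho$ of $d\phi^{-1}$ in the vector case. This is exactly where compact support is used: each is a continuous, strictly positive function on $\M$ that is identically $1$ off the compact set $K$, hence has finite supremum. For a general, non-compactly-supported diffeomorphism the volume distortion or the stretching of tangent vectors could be unbounded and $L_\phi$ would genuinely fail to be bounded, so this hypothesis cannot be dropped. The remaining points are routine bookkeeping: stating the change-of-variables formula and the continuity of $\rho$ intrinsically via charts and a local trivialization of $T\M$, and checking that $L_\phi$ maps measurable sections to measurable sections respecting the constraint $f(u)\in T_u\M$ almost everywhere.
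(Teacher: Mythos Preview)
Your proof is correct and follows essentially the same approach as the paper's: bound the integrand by pulling out the operator norm of $d\phi^{-1}$, perform a change of variables producing the Jacobian factor, and use compactness of the support to conclude both multipliers are globally bounded. The paper only writes out the case $p=2$ on $\LpV$ and splits the integral explicitly over $\text{supp}(\phi)$ and its complement, whereas you treat general $p$, handle the scalar case separately, and absorb the off-support region via $\rho\equiv 1$ and $|J_{\phi^{-1}}|\equiv 1$ there; these are cosmetic differences and your version is arguably more complete.
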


\section{Main theorems}\label{theorems}
In this section we present our main results. We first show that any (non-linear) deformation-equivariant operator acting on scalar fields must be point-wise (Theorem  \ref{main-thm-scalar}), and then establish that any deformation-equivariant operator acting on vector fields corresponds to a multiplication by a scalar (Theorem  \ref{main-thm-vector}).

\subsection{Theorem statements}\label{theoremstate}

Now, we are ready to state our two main theorems:

\begin{theorem}[Scalar case]\label{main-thm-scalar}
Let $\M$ be a connected and orientable manifold of dimension $d\geq 1$. We consider a Lipschitz continuous operator $M:\Lp\rightarrow \Lp$, where $1\leq p< \infty$. Then,
$$\forall\, \phi\in\DiffM:\;  ML_\phi=L_\phi M$$
is equivalent to the existence of a Lipschitz continuous function $\rho:\mathbb{R}\to\mathbb{R}$ that fulfills
$$M[f](m)=\rho(f(m)) \quad\text{ {\color{black}a.e.}}$$
In that case, we have $\rho(0)=0$ if $\omega(\M)=\infty$.
\end{theorem}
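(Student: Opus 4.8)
The plan is to prove the two implications separately. The easy direction is that a point-wise operator $M[f](m)=\rho(f(m))$ with $\rho$ Lipschitz commutes with every $L_\phi$: this is immediate since $L_\phi$ acts by precomposition, $M[L_\phi f](m)=\rho(f(\phi(m)))=(L_\phi M[f])(m)$, and one should only check that such an $M$ is a well-defined Lipschitz operator on $\Lp$, which follows from $|\rho(t)|\le |\rho(0)|+\mathrm{Lip}(\rho)|t|$ together with the $L^p$-boundedness bookkeeping (and, when $\omega(\M)=\infty$, the constraint $\rho(0)=0$ is exactly what is needed for $\rho(f)$ to stay in $\Lp$, since otherwise $\rho(0)1_\M\notin\Lp$).

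For the hard direction, assume $ML_\phi=L_\phi M$ for all $\phi\in\DiffM$. The strategy is locality first, then point-wiseness. First I would establish a locality property: if two scalar fields $f_1,f_2$ agree on an open set $U$, then $M[f_1]$ and $M[f_2]$ agree on $U$ (a.e.). This is the classical trick: for any point $m\in U$ and any compactly supported diffeomorphism $\phi$ whose support is contained in $U$, equivariance gives $L_\phi M[f_i]=M[L_\phi f_i]$; by exploiting that the group $\DiffMc$ acts transitively enough on small balls inside $U$ and that $M$ only sees $f_i$ through its values, one deduces that $M[f_i]$ restricted to a neighborhood of $m$ depends only on $f_i|_U$. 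The cleanest route is to note that if $f_1=f_2$ on $U$ one can find $\phi\in\DiffMc$ with support in $U$ such that $L_\phi f_1$ and $L_\phi f_2$ still agree where needed, combined with a "pushing to infinity / shrinking support" argument; alternatively one uses that $\Cc$ is dense and reduces to smooth fields. Locality then upgrades to point-wiseness: using diffeomorphisms that are homotheties in a chart (contracting a ball around $m$ to an arbitrarily small ball), a constant field $c1_U$ maps to something which near $m$ must itself be the constant $\rho_U(c)$ — because $L_\phi$ fixes constants and $M$ commutes with it, the value $M[c1_U](m)$ cannot depend on the position inside $U$ nor, by transitivity of $\DiffM$ on $\M$, on which chart one uses; this defines $\rho(c):=M[c1_U](m)$ independent of $U$ and $m$. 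Then for a general $f\in\Lp$, at a Lebesgue point $m$ of $f$, I would approximate $f$ near $m$ by the constant $f(m)$ and use Lipschitz continuity of $M$ to pass to the limit, obtaining $M[f](m)=\rho(f(m))$ a.e. Finally $\rho$ inherits Lipschitz continuity from $M$ by testing on (approximately) constant fields on a fixed set of finite volume.

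The main obstacle I expect is the locality step, specifically handling it rigorously at the level of $L^p$ equivalence classes rather than genuine functions, and making precise the "$\DiffMc$ acts transitively on small balls" input — one must produce, for a given pair of points or a given ball, a compactly supported diffeomorphism implementing the required local move while controlling that $L_\phi$ remains bounded (Lemma~\ref{compact-support} handles boundedness, but the geometric construction of the needed $\phi$'s, e.g. bump-function-generated flows in a chart, still has to be spelled out). A secondary subtlety is the last sentence: when $\omega(\M)=\infty$, one must show $\rho(0)=0$, which follows because $M$ maps $0\in\Lp$ to an element of $\Lp$ and, by the locality/point-wise identification just proved, $M[0]=\rho(0)1_\M$, which lies in $\Lp$ only if $\rho(0)=0$.
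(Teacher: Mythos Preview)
Your overall architecture matches the paper's: reduce to $M(0)=0$, establish locality, compute $M$ on locally constant functions via transitivity of $\DiffMc$, then extend. Two points deserve comment.

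First, the locality sketch has a genuine gap. The mechanism you describe---using diffeomorphisms with support \emph{inside} $U$ and transitivity of $\DiffMc$ on small balls---does not yield that $M[f_1]=M[f_2]$ on $U$ when $f_1=f_2$ on $U$. If $\text{supp}(\phi)\subset U$ then $L_\phi f_1$ and $L_\phi f_2$ still differ exactly on $U^c$, so equivariance gives nothing new; the phrase ``$M$ only sees $f_i$ through its values'' is precisely the conclusion to be proved. The correct engine, which you name (``shrinking support'') but do not develop, uses diffeomorphisms supported in a neighborhood of a ball $V$ \emph{disjoint from the point of interest} that contract $V$ to a point: one constructs $\phi_n\in\DiffMc$ with $L_{\phi_n}(1_V g)\to 0$ in $L^p$ for every $g$ (the paper's Key Lemma and its corollary). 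Iterating this over a countable cover of $U^c$ by such balls forces the presheaf identity $M[f1_U]=1_U M[f]$ for $U\in\Op$, and your locality statement then follows. Relatedly, your remark that ``homotheties fix constants'' is off: a homothety moves $c1_U$ since it moves $U$; what actually works (and you also say it) is that any $\phi$ supported in $V$ fixes $c1_V$, whence $M[c1_V]$ is constant on $V$.

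Second, your extension step via Lebesgue points is a genuine alternative to the paper's route and is arguably cleaner. The paper approximates smooth compactly supported $f$ by simple functions on disjoint strongly convex sets (a local Vitali covering), proves additivity of $M$ over such disjoint unions, and then glues over a finite cover via the inclusion--exclusion (Poincar\'e) formula. Your idea instead uses the presheaf identity on shrinking balls to get
\[
\|1_{B_r(m)}\big(M[f]-\rho(f(m))\big)\|_p=\|M[f1_{B_r}]-M[f(m)1_{B_r}]\|_p\le L\,\|1_{B_r}(f-f(m))\|_p,
\]
and concludes at common $L^p$-Lebesgue points of $f$ and $M[f]$ in a chart. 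This bypasses the Vitali/disjoint-union/Poincar\'e machinery entirely; the price is that you must first verify (via a global diffeomorphism mapping one ball to another) that $\rho$ does not depend on the ball, which your transitivity argument handles.
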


\begin{theorem}[Vector  case]\label{main-thm-vector}
Let $\M$ be a connected and orientable manifold of dimension $d\geq 1$.  We consider a continuous operator $M:\LpV\rightarrow \LpV$, where $1\leq p< \infty$. Then,
$$\forall\, \phi\in\DiffM:\;  ML_\phi=L_\phi M$$
is equivalent to the existence of a scalar $\lambda \in \mathbb{R}$ such that
$$\forall f\in  \LpV:\, M[f](m)=\lambda f(m) \quad\text{{\color{black}a.e.}}$$

\end{theorem}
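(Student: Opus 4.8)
The reverse implication is immediate: if $M[f]=\lambda f$, then since each $L_\phi$ is a \emph{linear} operator we have $L_\phi M[f]=\lambda L_\phi f=M[L_\phi f]$, and $\lambda f\in\LpV$ whenever $f$ does. For the forward implication the plan has three steps: (i) prove that $M$ is \emph{local}; (ii) read off the pointwise action of $M$ on straightenable (``constant'') vector fields using diffeomorphisms that fix a point with a prescribed differential; (iii) glue the resulting pointwise description together and extend it to all of $\LpV$ by density and continuity.

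\textbf{Step (i): locality.} I would first establish that if $f,g\in\LpV$ agree on an open set $U$, then $M[f]=M[g]$ on $U$. As in the scalar theorem this is done with compactly supported diffeomorphisms, which are bounded by Lemma~\ref{compact-support}: for $\phi$ supported in a small ball $B\subset U$ around a point $m$, the fields $L_\phi f$ and $L_\phi g$ coincide outside $B$, and combining $ML_\phi=L_\phi M$ with the continuity of $M$ and a Lebesgue-point argument confines the dependence of $M[f](m)$ to arbitrarily small neighbourhoods of $m$, hence to the germ of $f$ at $m$; a separate easy computation with dilations fixing $m$ gives $M[0]=0$. I expect this to be the main obstacle, and it is genuinely harder than in Theorem~\ref{main-thm-scalar}: there the localization is driven by dilations $\phi_t$ centred at $m$, for which $L_{\phi_t}f\to f(m)\mathbf{1}$ locally, whereas for vector fields the extra Jacobian factor $(d\phi_t)^{-1}=t^{-1}\mathrm{Id}$ makes $L_{\phi_t}f$ blow up when $f(m)\neq0$. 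One therefore has to feed $M$ with fields that are \emph{exactly} constant near $m$; for smooth $f$ with $f(m)\neq0$ these are supplied by the flow-box theorem, which straightens $f$ near $m$ via a local diffeomorphism that one extends to a compactly supported one, and it is precisely to transport this local normal form through $M$ that locality is needed.

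\textbf{Step (ii): action on constant fields.} Fix $m$ and an oriented chart identifying a neighbourhood of $m$ with a ball in $\mathbb{R}^d$, and set $\rho_m(c):=M[c\mathbf{1}_U](m)\in T_m\M$ for a straightened constant field of value $c\in\mathbb{R}^d$ (well defined by Step (i)). Given $A\in GL_d^+$, realize $A=d\phi(m)$ for some $\phi\in\DiffMc$ that fixes $m$ and is affine, equal to $A$, near $m$ (possible, by composing diffeomorphisms supported near $m$, since $GL_d^+$ is connected); then $L_\phi(c\mathbf{1}_U)=A^{-1}c\,\mathbf{1}_{U'}$ near $m$, and evaluating $ML_\phi=L_\phi M$ at $m$ yields $\rho_m(A^{-1}c)=A^{-1}\rho_m(c)$, i.e. $\rho_m(Bc)=B\rho_m(c)$ for all $B\in GL_d^+$. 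For $d\geq2$, $GL_d^+$ acts transitively on $\mathbb{R}^d\setminus\{0\}$; taking $B$ in the stabilizer of a fixed $e_1$ forces $\rho_m(e_1)$ into the fixed subspace of that stabilizer, which is exactly $\mathbb{R}e_1$, so $\rho_m(e_1)=\lambda_m e_1$, and transitivity upgrades this to $\rho_m(c)=\lambda_m c$ for every $c\neq0$, while $\rho_m(0)=0$ because $\rho_m(0)=2\rho_m(0)$. The case $d=1$ needs, in addition, an orientation-reversing diffeomorphism of $\M$ (for instance $u\mapsto -u$ on $\mathbb{R}$, or a reflection on $S^1$ --- necessarily a globally defined, not compactly supported, element of $\DiffM$), which provides the missing relation $\rho_m(-c)=-\rho_m(c)$.

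\textbf{Step (iii): globalization and density.} For two points $m,m'$ I would pick $\phi\in\DiffMc$ with $\phi(m)=m'$ that is affine in the oriented charts near $m$ and $m'$; transporting a constant field at $m'$ and evaluating $ML_\phi=L_\phi M$ at $m$ gives $\lambda_m=\lambda_{m'}$, so $\lambda_m\equiv\lambda$ is constant since $\M$ is connected. By Step (i), $M[f]=\lambda f$ almost everywhere for every $f$ that is, in charts, piecewise equal to straightened constant fields; such $f$ are dense in $\LpV$, so for an arbitrary $f$ I take $f_k\to f$ of this form, pass to a subsequence along which $f_k\to f$ and, by continuity of $M$, $M[f_k]\to M[f]$ almost everywhere, and conclude $M[f]=\lambda f$ a.e.
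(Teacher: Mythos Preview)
Your three-step outline mirrors the paper's architecture (localize, analyze constant fields, globalize by density), and Steps~(ii)--(iii) are sound. Two remarks are worth making.

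First, your worry in Step~(i) is misplaced. The paper's localization (Lemma~\ref{presheafO1}) does \emph{not} rely on anything like $L_{\phi_t}f\to f(m)1$; it only needs the contraction $L_{\phi_n}(1_Uf)\to 0$ in $L^p$ (Corollary~\ref{contract}), and that estimate holds \emph{verbatim} for vector fields: on the shrinking support the Jacobian factor satisfies $\Vert d\phi_n\Vert\le 1/n$, so the $L^p$-norm of $L_{\phi_n}(1_Uf)$ tends to zero. There is no blow-up, and the flow-box theorem plays no role in establishing locality. Your desired statement ($f=g$ on $U\Rightarrow M[f]=M[g]$ on $U$) is then immediate from $1_UM[f]=M[1_Uf]$, exactly as in the scalar case.

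Second, your Step~(ii) is a legitimate variant of the paper's argument. The paper first uses a reflection $W\in O_d(\mathbb{R})$ with $Wc=c$ and $Wc^\perp=-c^\perp$ to kill the component of $M[1_Uc](m)$ orthogonal to $c$, and then invokes the flow-box theorem a second time to show that the resulting scalar $\lambda(m,c,U)$ does not depend on $c$. You instead exploit $\rho_m(Bc)=B\rho_m(c)$ for all $B\in GL_d^+$ together with the observation (for $d\ge 2$) that the stabilizer of $e_1$ in $GL_d^+$ has fixed subspace exactly $\mathbb{R}e_1$; this delivers colinearity and independence on $c$ in one stroke and is arguably cleaner. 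The paper's route has the minor practical advantage that orthogonal $W$ preserve round balls, so realizing them as differentials of compactly supported diffeomorphisms is immediate, whereas your ``$\phi$ affine equal to $A$ near $m$, compactly supported'' needs a short isotopy argument using the connectedness of $GL_d^+$. For $d=1$ both approaches require an orientation-reversing element of $\DiffM$, which you make explicit and the paper absorbs into its formulation of the flow-box theorem.
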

We highlight that our theorems are quite generic in the sense that they apply to the manifolds usually used in applications or theory, $\mathbb{R}^d$ in particular.

\begin{remark}
    The scalar case allows to recover standard operators which are exploited for Deep Neural Networks architectures. However, Theorem \ref{main-thm-vector} indicates that the group of diffeomorphism is too rich to obtain non-trivial non-linear operators.
     \end{remark}
     \begin{remark}
    The case $p=\infty$ leads to different results. For instance, in the scalar case we may consider the operator $Mf(x)=\sup_y |f(y)|$ which fulfills $L_\phi Mf=ML_\phi f$ but is not pointwise. 
    \end{remark}

\begin{remark}The condition ``$\omega(\M)=\infty\, \implies \, \rho(0)=0$'' in Theorem \ref{main-thm-scalar} is necessary, since in the case $\M=\mathbb{R}$, the operator $Mf(x)\triangleq e^{if(x)}$ is not in $\Lp$.
\end{remark}
\begin{remark}
The Lipschitz condition in Theorem \ref{main-thm-scalar} is crucial, otherwise, $Mf(x)=\rho(f(x))$ might not be an operator of $\Lp$. For instance, if $p=2$, $\M=[0,1]$ and $Mf(x)=\sqrt{f(x)}$, we see that in this case, let $f(x)=x$, then $f\in \Lp$ and $Mf\not\in \Lp$
\end{remark}

\begin{remark}
If $M$ is not Lipschitz, we can find an example which is not even continuous. The following example holds in both cases, the scalar case and the vector case. In both cases $f\in L^p(M,\mathbb R)$, the only thing that changes is the action of $L_\phi$ on $f$. $\M=\mathbb{R}$, let for all $f\in L^p(M,\mathbb R)$:
$$Mf(x)=1_{\{z,\lim_{y\to z}f(y)=f(z)\}}(x) f(x).$$
It is a measurable function. Let us show that this $M$ is a  counterexample to the vector case: for any $\phi\in\DiffM$ and $x\in \mathbb{R}$, one has
\begin{align}
ML_\phi f(x)&=1_{\{z,\lim_{y\to z}f(\phi(y))=f(\phi(z))\}}(x)\quad d\phi(x)^{-1}f(\phi(x))\\
&=1_{\{z,\lim_{y\to \phi(z)}f(y)=f(\phi(z))\}}(x)\quad d\phi(x)^{-1}f(\phi(x))\\
&=1_{\{z,\lim_{y\to z}f(y)=f(z)\}}(\phi(x))\quad d\phi(x)^{-1}f(\phi(x))\\
&=L_\phi Mf(x)\,.
\end{align}
However, $M$ is not continuous as changing any function to $0$ on $\mathbb{Q}$ does not change its norm but changes the set where the limits exists. More precisely let $c>0$ be a strictly positive scalar, $M[c]=c$; let  $f=c1[x\notin \mathbb Q]$, $M[f]=0$ as $\{z,\exists\lim_{y\to z}f(\phi(y))\}=\emptyset$. However $c=f$ {\color{black}almost everywhere} but $M[c]\neq M[f]$ therefore $M$ is not continuous.  
\end{remark}

\subsection{Proof Sketch}\label{sketch}

We now describe the main ideas for proving the Theorems \ref{main-thm-scalar} and \ref{main-thm-vector}. The appendix contains complete formal arguments and technical lemmata which we omit here due to lack of space. The two proofs share quite some similarities despite substantially different final results. Three ideas guide our proofs: First, we prove  that it is possible to localize $M$ on a certain class of open sets which behaves nicely with the manifold structure, the strongly convex sets which we denote as $\O$. This is closely related to the notion of pre-sheaf~\cite{de2012manifold}. Secondly, we characterize $M$ on small open-sets. In the scalar case, we will study the representation of locally constant functions. In the vector case, we will show that locally, the image $M(1_Uc)$ of a vector field $c$ is co-linear to $c$ provided that $U$ is small enough. We will also show that those local properties are independent of the position on the manifold $\M$ via a connectedness argument.  Thirdly and finally, we combine a compacity and a density argument to extend this characterization to $\M$, which is developed in Sec. \ref{proof-conclusion}. Throughout the presentation, we will use the following definitions and theorems obtained from other works:

\begin{definition}[Strong convexity, from \cite{gudmundsson2004introduction}]
Let $\O$ be the collection of open sets which are bounded and strongly convex, i.e. such that any points $p,q$ in such a set can be joined by a geodesic contained in the set. Furthermore let $\Op=\{A\in \O: \, \exists B\in \O, \bar A\subset B\text{ and }\omega(\bar A\backslash A)=0\}$.
\end{definition}

The intuition behind the definition of $\Op$ is that all of its elements are contained in a `security' open set,which avoids degenerated effects on the manifold. In particular, this allows to control the boundary of a given open set.

\begin{theorem}[theorem adapted from \cite{gonnord1998calcul,gudmundsson2004introduction}]
\textbf{(1)} $\Op$ is a system of neighborhoods. \textbf{(2)} Any element of $\O$ is diffeomorph to $\mathbb{R}^d$. \textbf{(3)} Both $\O $ and $\Op$ are stable by intersection.
\end{theorem}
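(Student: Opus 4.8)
The plan is to prove the three assertions in the order \textbf{(2)}, \textbf{(3)}, \textbf{(1)}, so that the normal-coordinate picture set up for \textbf{(2)} can be recycled in \textbf{(1)}. The one structural fact I would use throughout is the standard one (implicit in the cited references): a bounded strongly convex set is contained in a \emph{totally normal} neighborhood of each of its points, inside which any two points are joined by a \emph{unique} minimizing geodesic and $\exp_p$ restricts to a diffeomorphism from a star-shaped open neighborhood of $0\in T_p\mathcal{M}$ onto an open set. In particular $\log_p:=\exp_p^{-1}$ is well defined and smooth on any $U\in\O$ once we fix $p\in U$.

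For \textbf{(2)}: fix $U\in\O$ and $p\in U$, and set $\tilde U:=\log_p(U)\subset T_p\mathcal{M}\cong\mathbb{R}^d$. Then $\tilde U$ is open (since $\log_p$ is a diffeomorphism onto its image), bounded (since $U$ sits inside a normal ball of finite radius), and star-shaped about the origin: for $v\in\tilde U$ the curve $t\mapsto\exp_p(tv)$, $t\in[0,1]$, is the unique minimizing geodesic from $p$ to $\exp_p(v)\in U$, hence stays in $U$ by strong convexity, so $tv\in\tilde U$. I would then invoke the classical result (as in \cite{gonnord1998calcul}) that a bounded open star-shaped subset of $\mathbb{R}^d$ is diffeomorphic to $\mathbb{R}^d$; precomposing such a diffeomorphism with $\log_p$ yields $U\cong\mathbb{R}^d$.

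For \textbf{(3)}: if $U_1,U_2\in\O$, then $U_1\cap U_2$ is open and bounded (it lies in $U_1$); given $p,q\in U_1\cap U_2$, the minimizing geodesic joining them lies in $U_1$ and in $U_2$ by strong convexity, and since this geodesic is \emph{unique} it is literally the same curve, hence it lies in $U_1\cap U_2$, so $U_1\cap U_2\in\O$. For $\Op$: if $A_i\in\Op$ with $\bar A_i\subset B_i\in\O$ and $\omega(\bar A_i\setminus A_i)=0$ for $i=1,2$, then $\overline{A_1\cap A_2}\subseteq\bar A_1\cap\bar A_2\subseteq B_1\cap B_2\in\O$ (using stability of $\O$), and $\overline{A_1\cap A_2}\setminus(A_1\cap A_2)\subseteq(\bar A_1\setminus A_1)\cup(\bar A_2\setminus A_2)$, which is $\omega$-null; hence $A_1\cap A_2\in\Op$. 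For \textbf{(1)}: by Whitehead's theorem (see \cite{gudmundsson2004introduction}) each $p\in\M$ has some $r_0>0$ with $\mathcal{B}(p,r)$ bounded and strongly convex for all $r<r_0$. Given any neighborhood $V$ of $p$, choose $0<r<r'<r_0$ with $\mathcal{B}(p,r')\subseteq V$, and set $A:=\mathcal{B}(p,r)$, $B:=\mathcal{B}(p,r')$. Then $B\in\O$, within the normal neighborhood $\bar A=\{q:d(p,q)\le r\}\subset B$, and $\bar A\setminus A=\{q:d(p,q)=r\}$ is carried by $\log_p$ onto the Euclidean sphere of radius $r$ in $T_p\mathcal{M}$, hence has $\omega$-measure zero; so $A\in\Op$ with $p\in A\subseteq V$.

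The main obstacle is not any single computation but the reliance on two external inputs — Whitehead's existence of arbitrarily small strongly convex geodesic balls, and the fact that a bounded open star-shaped subset of $\mathbb{R}^d$ is diffeomorphic to $\mathbb{R}^d$ — which are precisely where the cited references carry the weight. Among the self-contained steps, the ones deserving genuine care are the measure-zero claim for geodesic spheres (handled via normal coordinates) and the systematic use of uniqueness of minimizing geodesics inside totally normal neighborhoods, which is exactly what makes the intersection argument in \textbf{(3)} and the star-shapedness argument in \textbf{(2)} legitimate.
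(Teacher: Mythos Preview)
The paper does not actually prove this theorem: it is stated as a result ``adapted from'' the cited references \cite{gonnord1998calcul,gudmundsson2004introduction} and is used as a black box throughout the rest of the argument, with no proof given in either the main text or the appendix. There is therefore no ``paper's own proof'' to compare your proposal against.

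That said, your sketch is the standard route to these facts and is essentially correct. The argument for \textbf{(2)} via normal coordinates and the star-shaped-implies-diffeomorphic-to-$\mathbb{R}^d$ theorem is exactly what one finds in the cited sources; your intersection argument for \textbf{(3)} correctly isolates uniqueness of the minimizing geodesic as the key point; and your construction for \textbf{(1)} via Whitehead's theorem, together with the observation that geodesic spheres are null sets in normal coordinates, is the natural one. The only place to be slightly careful is your opening ``structural fact'' that a bounded strongly convex set sits inside a totally normal neighborhood of each of its points: this is not automatic from the bare definition the paper writes down (``any two points can be joined by a geodesic in the set''), but it \emph{is} part of the standard definition in Gudmundsson, so your explicit flag that you are importing it from the references is appropriate.
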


\begin{theorem}[Flowbox theorem, as stated in \cite{calcaterra2008lipschitz}]\label{flowbox}Let $f,g\in \CcV$. For any $m\in \M$ with $f(m)\neq 0$ and $g(m)\neq 0$, there exists an open set $U \subset \M$ and $\phi\in \DiffM$ such that $\phi(m)=m$ and $L_\phi(1_Uf)=1_{\phi(U)}g$.
\end{theorem}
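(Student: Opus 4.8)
The plan is to reduce the statement to the classical rectification (flow-box) theorem for a nonvanishing vector field, which is purely local, and then to upgrade the resulting \emph{local} diffeomorphism to a \emph{global}, compactly supported one by means of the disk (isotopy-extension) theorem.

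\emph{Local straightening.} Since $f(m)\neq 0$, the rectification theorem yields an open neighbourhood $V_f\ni m$ and a chart $\Phi_f\colon V_f\to\mathbb{R}^d$ with $\Phi_f(m)=0$ and $(\Phi_f)_\ast(f|_{V_f})=e_1$, the first constant coordinate field; likewise, from $g(m)\neq 0$ we obtain $\Phi_g\colon V_g\to\mathbb{R}^d$ with $\Phi_g(m)=0$ and $(\Phi_g)_\ast(g|_{V_g})=e_1$. When $d\geq 2$ these charts may in addition be chosen orientation-preserving (flip one transverse coordinate if needed). Then $\theta:=\Phi_f^{-1}\circ\Phi_g$ is a diffeomorphism between two neighbourhoods of $m$, with $\theta(m)=m$ and $\theta_\ast g=f$ near $m$; unwinding the (contravariant) definition of $L_\cdot$, this is exactly the relation needed for $L_\theta$ to carry the restriction of $f$ onto $g$ near $m$.

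\emph{Globalization.} Fix a coordinate chart $W\cong\mathbb{R}^d$ around $m$ inside the domain of $\theta$, and a closed coordinate ball $\overline U\subset W$ centred at $m$, shrunk so that $\theta$ is a diffeomorphism on a neighbourhood of $\overline U$ and $\theta(\overline U)\subset W$. Inside $W$, the inclusion $\overline U\hookrightarrow W$ and $\theta|_{\overline U}\colon\overline U\hookrightarrow W$ are two smooth embeddings of a closed $d$-disk into a manifold diffeomorphic to $\mathbb{R}^d$; having arranged (for $d\geq 2$) that $\theta$ is orientation-preserving, the disk theorem (Palais, Cerf) provides an ambient isotopy of $W$, compactly supported in $W$, whose time-one map $\phi_0$ satisfies $\phi_0|_{\overline U}=\theta|_{\overline U}$. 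Extending $\phi_0$ by the identity outside $W$ gives $\phi\in\DiffMc$, which fixes $m$ and agrees with $\theta$ on $U$. Combining this with the intertwining from the previous step, and tracking supports through the definition of $L_\phi$, yields $L_\phi(1_Uf)=1_{\phi(U)}g$ (possibly after shrinking $U$).

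\emph{Main obstacle.} The rectification step is routine ODE theory; the substance is the globalization, and within it the orientation bookkeeping: a generic local diffeomorphism fixing a point is not the time-one flow of a vector field, so one cannot merely exponentiate, and the disk theorem applies only when the two disk embeddings are equioriented. For $d\geq 2$ this is handled by the freedom in choosing the rectifying charts; for $d=1$, $\M$ is diffeomorphic to $\mathbb{R}$ or $S^1$, both of which carry orientation-reversing self-diffeomorphisms, so the orientation-reversing case is absorbed by an extra composition. Once $\phi$ is in hand, the identity $L_\phi(1_Uf)=1_{\phi(U)}g$ is a short unwinding of the contravariant definition of $L_\phi$ together with $\phi|_U=\theta$.
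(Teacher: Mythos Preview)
Your argument is correct and follows essentially the same route as the paper's own justification (given in the appendix remark on the Flowbox theorem): obtain a local diffeomorphism carrying $f$ to $g$ via the classical rectification theorem, then upgrade it to a global one by invoking Palais' extension result. If anything you are slightly more careful than the paper, which simply cites Theorem~4 of \cite{palais1960extending} without the orientation bookkeeping you supply for the disk/isotopy-extension step.
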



We will now present some lemmata that are necessary for the proofs of theorems \ref{main-thm-scalar} and \ref{main-thm-vector}. As a first step, we argue that one may assume $M(0)=0$ where $0$ denotes the constant $0$-function. This is because in the appendix we show that $M(0)$ is a constant function $C$, with $C = 0$ if $\omega(\M)=\infty$. Therefore, we may substract $C$ from $\rho$ and $\lambda$, leaving us with having to show the theorems only for $M(0) = 0$. \\
Next, a key idea of the proof is to exploit the flexibility of the deformation equivariance to \emph{localise} the input, i.e. to show that the image of compactly supported functions is also compactly supported.  To do so, the following lemma provides a way of collapsing an open ball into a singleton while maintaining a good control on the support of the diffeomorphism.
\begin{lemma}[Key lemma]\label{key}
Let $\epsilon>0$. There exists a sequence of diffeomorphisms $\phi_n:\mathbb{R}^d\to \mathbb{R}^d$, compactly supported in $\mathcal{B}(0,1+\epsilon)$ such that:
$$\phi_n(\mathcal{B}(0,1))=\mathcal{B}(0,\frac 1n)\,,$$
and
$$
\sup_{u\in \mathcal{B}(0,1)}\Vert d\phi_n(u)\Vert\leq \frac 1n~.$$
\end{lemma}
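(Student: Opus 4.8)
The plan is to construct $\phi_n$ explicitly as a radial map on $\mathbb{R}^d$, i.e. $\phi_n(u) = h_n(|u|)\, u/|u|$ for $u \neq 0$ and $\phi_n(0)=0$, where $h_n:[0,\infty)\to[0,\infty)$ is a smooth, strictly increasing reparametrisation of the radius. The requirements translate into: $h_n(r)=r$ for $r\geq 1+\epsilon$ (compact support in $\mathcal{B}(0,1+\epsilon)$); $h_n(1)=\tfrac1n$ (so that $\mathcal{B}(0,1)$ maps onto $\mathcal{B}(0,\tfrac1n)$); and, crucially, a uniform bound on the differential on $\mathcal{B}(0,1)$. For a radial map the differential at $u$ with $r=|u|$ splits, in an orthonormal frame adapted to the radial/tangential decomposition, into the radial derivative $h_n'(r)$ in the radial direction and the factor $h_n(r)/r$ on the orthogonal complement, so that $\|d\phi_n(u)\| = \max\bigl(h_n'(r),\, h_n(r)/r\bigr)$. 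Hence it suffices to arrange, for all $r\in[0,1]$, both $h_n'(r)\leq \tfrac1n$ and $h_n(r)/r\leq\tfrac1n$.

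First I would pick a smooth non-decreasing cutoff $\chi:[0,\infty)\to[0,1]$ with $\chi\equiv 1$ on $[0,1]$ and $\chi\equiv 0$ on $[1+\epsilon,\infty)$, and set, for instance, $h_n(r) = r\bigl(1 - (1-\tfrac1n)\chi(r)\bigr)$ — or more simply, on $[0,1]$ take $h_n(r)=r/n$, then interpolate smoothly on $[1,1+\epsilon]$ up to the identity. On $[0,1]$ this gives $h_n(r)/r = 1/n$ and $h_n'(r)=1/n$ exactly, so both bounds hold; one then checks that on the transition interval $[1,1+\epsilon]$ the interpolation can be chosen so that $h_n$ stays strictly increasing (its derivative must rise from $1/n$ to $1$, staying positive throughout, which is easy since both endpoint values are positive), which guarantees $\phi_n$ is a diffeomorphism there. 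Away from $r=1$ and $r=1+\epsilon$ everything is smooth; smoothness at those two joints is ensured by choosing the interpolating profile to match all derivatives, e.g. via a standard smooth monotone transition function. Smoothness at the origin is the one point needing a small remark: $\phi_n(u)=u/n$ on $\mathcal{B}(0,1)$ is linear, hence smooth at $0$, so there is no issue.

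Next I would verify the three claimed properties. Since $h_n$ maps $[0,1]$ bijectively onto $[0,\tfrac1n]$ and is radial, $\phi_n(\mathcal{B}(0,1)) = \mathcal{B}(0,\tfrac1n)$. Since $h_n(r)=r$ for $r\geq 1+\epsilon$, $\phi_n$ is the identity outside $\mathcal{B}(0,1+\epsilon)$, so $\mathrm{supp}(\phi_n)\subset \mathcal{B}(0,1+\epsilon)$ and in particular is compact. Finally, for $u\in\mathcal{B}(0,1)$ we have $\|d\phi_n(u)\| = \max(h_n'(|u|), h_n(|u|)/|u|) = 1/n$, giving the required $\sup_{u\in\mathcal{B}(0,1)}\|d\phi_n(u)\|\leq \tfrac1n$.

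The only genuinely delicate point — and the one I would spell out carefully in the appendix — is the behaviour of a radial diffeomorphism at the origin and the precise formula for $d\phi_n$: one must confirm that a radial map with smooth profile $h_n$ satisfying $h_n(0)=0$ is $C^\infty$ across $0$ (which follows because $r\mapsto h_n(r)/r$ extends smoothly as an even function of $r$ when $h_n$ is odd, and here on $[0,1]$ it is simply the constant $1/n$), and that the operator norm of its differential is exactly $\max(h_n', h_n/r)$. Everything else is a routine construction of a one-dimensional monotone smooth interpolation, so I do not anticipate any real obstacle beyond bookkeeping; the construction is, after all, just a smooth radial contraction supported in a slightly larger ball.
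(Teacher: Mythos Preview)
Your proposal is correct and is essentially the same construction as the paper's: the paper sets $\phi_n(u)=f_n(\|u\|)\,u$ with $f_n\equiv \tfrac1n$ on $[0,1]$, $f_n\equiv 1$ on $[1+\epsilon,\infty)$, and a smooth nondecreasing interpolation in between, which is exactly your radial map with $h_n(r)=r f_n(r)$ (so $h_n(r)=r/n$ on $[0,1]$). Your write-up is in fact more careful than the paper's, spelling out the eigenvalue decomposition of the differential, the monotonicity needed for invertibility, and smoothness at the origin; nothing further is required.
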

\begin{proof}
Set $\phi_n(u)=f_n(\Vert u\Vert)u$, where $$f_n(r)=\begin{cases}\frac 1n&\text{, if }|r|\leq 1\\
1&\text{, if }|r|\geq 1+\epsilon\,,
\end{cases}
$$
and $f_n$ is smoothly interpolated for $|r|\in [1,1+\epsilon]$ in a way that it remains nondecreasing. It is then clear that $\phi_n$ fulfills the desired properties.
\end{proof}

We will often use that if the support of $\phi\in\DiffM$ is such that $\text{supp}(\phi)\cap U=\emptyset$, then for any $f\in\Lp$ one has $1_Uf=L_\phi(1_Uf)$. This implies the following important lemma, for which a rigorous proof can be found in the appendix:
\begin{lemma}\label{presheafO1}
Let $U\in \Op$ and $M$ as in Theorem \ref{main-thm-scalar} or Theorem \ref{main-thm-vector}. Then, for any $f\in E$, where $E=\Lp$ or $E=\LpV$ respectively, we have:
$$M[f1_U]=1_UM[f]\,.$$
Furthermore, if $U$ is any closed set, the same conclusion applies.
\end{lemma}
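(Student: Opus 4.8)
The plan is to prove Lemma~\ref{presheafO1} by exploiting the observation, already flagged in the text, that a diffeomorphism $\phi$ whose support misses $U$ acts trivially on $1_U f$, namely $L_\phi(1_U f)=1_U f$. First I would fix $U\in\Op$ and choose, by definition of $\Op$, an open set $B\in\O$ with $\bar U\subset B$ and $\omega(\bar U\setminus U)=0$. The strategy is to produce, for the complement side, a family of compactly supported diffeomorphisms that ``move mass away'' from the region where we want to read off the value of $M$, so that on $U$ the operator $M$ cannot see anything outside $U$. Concretely, I would argue in two directions: (i) $1_U M[f]$ and $1_U M[f1_U]$ agree, and (ii) $(1-1_U)M[f1_U]=0$, which together with $M[f1_U]=1_U M[f1_U]+(1-1_U)M[f1_U]$ and step (i) gives the claim.

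For direction (i): write $f = f1_U + f1_{U^c}$ (interpreting this in the vector case componentwise on a straightening chart, or more invariantly as $f1_U$ and $f - f1_U$). I want to show $M[f]$ and $M[f1_U]$ coincide on $U$. The idea is to find, for each compact $K\subset U$, a diffeomorphism $\phi$ compactly supported \emph{inside $B\setminus K$}, pushing the support of $f1_{U^c}$ onto a set disjoint from where $f1_{U^c}$ originally was — or better, to use the Key Lemma~\ref{key} style construction to shrink the annular region $B\setminus U$ into a small neighborhood of $\partial U$, and then pass to the limit. Since $L_\phi$ fixes anything supported in $K$ (as $\mathrm{supp}(\phi)\cap K=\emptyset$), equivariance gives $1_K M[f] = 1_K L_\phi M[f] = 1_K M[L_\phi f]$, and $L_\phi f$ agrees with $f1_U$ on $K$ up to a term supported near $\partial U$ whose contribution vanishes as we shrink (using $\omega(\bar U\setminus U)=0$ and continuity/Lipschitzness of $M$). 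Exhausting $U$ by compacts $K$ then yields $1_U M[f]=1_U M[f1_U]$.

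For direction (ii), that $M[f1_U]$ vanishes outside $U$: here I would invoke the Key Lemma~\ref{key} directly. Cover $U^c$ (more precisely its intersection with a coordinate ball around any point $m\notin\bar U$) and use the sequence $\phi_n$ collapsing a small ball $\mathcal B(m,1)$ around $m$ to $\mathcal B(m,1/n)$ with $\|d\phi_n\|\to 0$; since $f1_U$ is supported away from such a ball (for $m\notin \bar U$, shrink the ball to avoid $\bar U$), $L_{\phi_n}(f1_U)=f1_U$, so $M[f1_U]=L_{\phi_n}M[f1_U]$ on a neighborhood of $m$. But $L_{\phi_n}$ applied near $m$ forces the $L^p$ mass of $M[f1_U]$ on $\mathcal B(m,1/n)$ to control its mass on $\mathcal B(m,1)$; letting $n\to\infty$, and using $\|M[f1_U]\|_p<\infty$ so that tails on shrinking balls vanish, we conclude $M[f1_U]=0$ near $m$. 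Ranging over $m\notin\bar U$ and using $\omega(\bar U\setminus U)=0$ gives $(1-1_U)M[f1_U]=0$.

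Combining, $M[f1_U]=1_U M[f1_U]=1_U M[f]$, which is the assertion. The extension to $U$ an arbitrary closed set follows by writing a closed set as a decreasing intersection of sets built from elements of $\Op$ (or by a direct approximation), using continuity of $M$ to pass to the limit. The main obstacle I anticipate is the boundary bookkeeping in direction (i): controlling the discrepancy between $L_\phi f$ and $f1_U$ on the ``collar'' near $\partial U$ and showing rigorously that its contribution to $M$ vanishes in the limit — this is exactly where the hypothesis $U\in\Op$ (hence $\omega(\bar U\setminus U)=0$ and the ambient security set $B$) is doing the work, and where one must be careful that the diffeomorphisms used stay compactly supported and bounded so that Lemma~\ref{compact-support} applies.
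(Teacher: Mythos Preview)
Your two-direction split is the right skeleton, but direction (i) has a genuine gap. You propose a diffeomorphism $\phi$ compactly supported in $B\setminus K$ (with $\bar U\subset B\in\O$) and claim $L_\phi f$ becomes close to $f1_U$. But any such $\phi$ is the identity outside $B$, so $L_\phi f=f$ on $\M\setminus B$, which is not $f1_U=0$ there unless $f$ already vanishes outside $B$. Hence $L_{\phi_n}f\not\to f1_U$ in $L^p$ for generic $f$, and the passage $1_K M[L_\phi f]\to 1_K M[f1_U]$ fails: shrinking the annulus $B\setminus U$ toward $\partial U$ does nothing about the mass of $f$ living outside the security set $B$. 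The paper's fix is exactly the missing idea: treat the closed case first, cover the complement $F^c$ by a \emph{countable} family $\{U_i\}$, and contract them one at a time (Lemma~\ref{union-ball}, via Corollary~\ref{contract}), obtaining $1_F M[f]=1_F M[1_F f]$ by induction and a limit. Direction (ii) then follows by applying this to a small closed ball $\bar V\subset F^c$ together with $M[0]=0$, and the open case $U\in\Op$ is deduced from the closed case through $\omega(\bar U\setminus U)=0$ --- the opposite order from what you propose (your approximation of a general closed set from outside by elements of $\Op$ also breaks down for unbounded $F$).

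There is a second gap in your direction (ii): the invariance $g=L_{\phi_n}g$ under the radial contractions does not force $g=M[f1_U]$ to vanish near $m$. In the scalar case the change of variables gives $\int_{\mathcal B(m,1)}|g|^p=n^d\int_{\mathcal B(m,1/n)}|g|^p$, i.e.\ equal \emph{averages} on all scales, which is perfectly consistent with $g$ being a nonzero constant on the ball; your ``tails on shrinking balls vanish'' is exactly cancelled by the Jacobian factor $n^d$. One really needs the localisation from direction (i), applied to a closed ball contained in $(\bar U)^c$, together with $M[0]=0$, to pin that constant down as zero.
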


Equipped with this result, our proof will characterize the image of functions of the type $c1_U$ where either $c\in \mathbb{R}$, or $c$ is a vector field which can be straightened (isomorphic to a constant vector), via the following Lemma. In the Vector case:

\begin{lemma}[Image of localized vector field]\label{vector-small}
For $M$ as in Theorem  \ref{main-thm-vector} there is  $U\in \Op $, and $\lambda(U)$ such that for any $f\in L^p_\omega(M,TM)$:
\begin{equation}
M[f1_U]= 1_U\lambda(U) f\,.
\end{equation}
\end{lemma}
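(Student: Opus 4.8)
The plan is to establish Lemma \ref{vector-small} by combining the Flowbox theorem (Theorem \ref{flowbox}) with the pre-sheaf property of Lemma \ref{presheafO1}. First I would reduce to studying constant (straightenable) vector fields on a small strongly convex set. Pick a point $m\in\M$ and a chart sending a neighborhood of $m$ to an open subset of $\mathbb{R}^d$; by Theorem on $\Op$ being a system of neighborhoods, choose $U\in\Op$ inside this chart small enough that any nonvanishing smooth compactly supported field can be straightened on $U$. The goal is to show that for a straightenable field $c1_U$, the output $M[c1_U]$ is again (a.e.) colinear to $c$ with a scalar factor $\lambda(U)$ that does not depend on $c$.

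The key step uses the Flowbox theorem as follows. Suppose $c, c'$ are two straightenable vector fields on $U$ that are nonvanishing at $m$. By Theorem \ref{flowbox}, there is an open set $V\subseteq\M$ and $\phi\in\DiffM$ with $\phi(m)=m$ and $L_\phi(1_V c)=1_{\phi(V)}c'$; shrinking $U$ to lie in $V\cap\phi^{-1}(V)$ we get $L_\phi(1_U c)=1_{\phi(U)}c'$ on the overlap. Applying $M$ and using equivariance plus Lemma \ref{presheafO1} to localise, I would obtain a relation $M[1_U c']=L_\phi M[1_U c]$ on a common open set. Iterating this and exploiting that $\phi$ can be taken to fix $m$ and to be close to the identity near $m$, one pins down the value of $M[c1_U]$ at $m$: it must be proportional to $c(m)$. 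Indeed, if $M[c1_U](m)$ had a component transverse to $c(m)$, one could pick $\phi$ fixing $m$ whose differential at $m$ rotates that transverse direction while keeping $c$ (locally near $m$) fixed up to straightening — contradicting $L_\phi M[1_U c]=M[1_U c]$ near $m$. This forces $M[c1_U](m)=\lambda_m(c)\,c(m)$ for a scalar $\lambda_m(c)$. A further Flowbox argument scaling $c$ to $tc$, combined with a homogeneity check coming from applying $M$ to $1_U(tc)$ via a dilation diffeomorphism, shows $\lambda_m(c)$ is independent of $c$ (at least on the dense set of straightenable fields), giving a single scalar $\lambda(U):=\lambda_m$. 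Finally, since this holds at (a.e.) every point of $U$ after translating the argument by diffeomorphisms moving $m$ around inside $U$, and since straightenable compactly supported fields are dense in $L^p_\omega(\M,T\M)$ while $M$ is continuous and $f\mapsto 1_U\lambda(U)f$ is bounded, the identity $M[f1_U]=1_U\lambda(U)f$ extends from the dense class to all $f\in L^p_\omega(\M,T\M)$.

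The main obstacle I anticipate is the rigorous extraction of colinearity at the base point: the Flowbox theorem only guarantees \emph{some} diffeomorphism straightening one field onto another, and one must carefully control its differential at the fixed point $m$ (and the size of the open set on which the identity $L_\phi(1_U f)=1_{\phi(U)}g$ holds) so that the equivariance relation, once restricted via Lemma \ref{presheafO1}, actually constrains the pointwise value $M[c1_U](m)$ rather than some averaged quantity. This is delicate because $M$ is only assumed continuous, so pointwise statements must be read a.e. and upgraded using that the constraint holds for a rich enough family of $\phi$'s and nested open sets shrinking to $m$. A secondary technical point is ensuring $\lambda(U)$ is genuinely constant in $c$: this requires handling the scaling behaviour $c\mapsto tc$, which is not itself induced by a diffeomorphism acting on the field, so one needs a separate dilation-in-the-base-space argument (a diffeomorphism $\psi$ with $d\psi$ scalar near $m$) to relate $M[1_U(tc)]$ to $M[1_U c]$, after which continuity of $M$ and $\rho$-type Lipschitz/continuity bounds close the gap.
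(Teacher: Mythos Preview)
Your proposal follows essentially the same three-step architecture as the paper: (i) colinearity of $M[1_U c]$ with $c$ via a reflection argument (choose $\phi$ whose differential at $m$ fixes $c(m)$ and negates a transverse direction), (ii) independence of the scalar from $c$ via Flowbox, (iii) independence from the basepoint via transitivity of compactly supported diffeomorphisms, followed by a density extension to all of $L^p_\omega(\M,T\M)$. The obstacle you flag---controlling the open/closed set on which $L_\phi(1_V c)=1_V c$ holds exactly rather than ``near $m$''---is precisely what the paper addresses by working in a straightening chart, taking $W\in O_d(\mathbb{R})$ with $Wc^\psi=c^\psi$, $W|_{(c^\psi)^\perp}=-\mathrm{id}$, and building a $\phi$-invariant closed set $V=\overline{\psi^{-1}(\cup_n W^nA)}$ with null boundary so that Lemma \ref{presheafO1} (for closed sets) applies cleanly.

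Your detour through a ``dilation diffeomorphism'' to handle $c\mapsto tc$ is unnecessary and rests on a small misconception: scaling a nonvanishing field \emph{is} induced by the $L_\phi$ action via Flowbox, since Theorem \ref{flowbox} applies to \emph{any} pair of fields nonvanishing at $m$, including $c$ and $tc$ for $t\neq 0$. The paper's Step 2 is just a single invocation of Flowbox: given nonvanishing $c,\tilde c$ and $\phi$ with $\phi(m)=m$, $L_\phi(1_V c)=1_{\tilde V}\tilde c$, one gets $\lambda(m,\tilde c,U)\tilde c(m)=L_\phi M[1_V c](m)=\lambda(m,c,U)\tilde c(m)$ directly, with no separate homogeneity check.
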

 Here is the scalar case:
 
 \begin{lemma}[Image of constant functions, scalar case]\label{scalar-small}Let $M$ as in Theorem \ref{main-thm-scalar}. For any $U\in \Op$ and $c\in \mathbb{R}$,  then: $M(c1_U)=h(c,U)1_U$. Furthermore, $c\to h(c,U)$ is Lipschitz for any $U\in \Op$.
\end{lemma}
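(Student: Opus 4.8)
\textbf{Proof plan for Lemma \ref{scalar-small}.}

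The plan is to mirror the strategy of Lemma \ref{vector-small}, but now exploiting that on a strongly convex $U$ we have access to many self-diffeomorphisms of $U$ that permute the points inside it. Fix $U\in\Op$ and $c\in\mathbb{R}$. First I would record, via Lemma \ref{presheafO1}, that $M(c1_U)=1_U M(c1_U)$, so $M(c1_U)$ is supported in $U$; write $h:=M(c1_U)$, an element of $\Lp$ supported on $U$. The key point is that $h$ is \emph{constant} a.e.\ on $U$. To see this, take any two points $p,q\in U$; since $U\in\O\subset\Op$ and is diffeomorphic to $\mathbb{R}^d$, there is $\psi\in\DiffM$ compactly supported inside a security neighbourhood $B\supset\bar U$ (using the definition of $\Op$) with $\psi(U)=U$ and $\psi(p)=q$ — essentially a translation read through the chart $U\cong\mathbb{R}^d$, cut off smoothly outside $\bar U$. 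Because the scalar field $c1_U$ is literally invariant under any diffeomorphism fixing $U$ setwise (the scalar action is just precomposition, and $c1_U\circ\psi = c1_{\psi^{-1}(U)} = c1_U$), equivariance gives $L_\psi h = L_\psi M(c1_U) = M(L_\psi (c1_U)) = M(c1_U) = h$, i.e.\ $h(\psi(u))=h(u)$ a.e. Running this over a countable dense family of such $\psi$'s forces $h$ to be a.e.\ equal to a single constant on $U$; call it $h(c,U)$, so $M(c1_U)=h(c,U)1_U$.

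Next, the Lipschitz dependence on $c$. For $c_1,c_2\in\mathbb{R}$ we have $\|c_11_U-c_21_U\|_p^p = |c_1-c_2|^p\,\omega(U)$ and, using that $M(c_i1_U)=h(c_i,U)1_U$ with $\omega(U)\in(0,\infty)$ (finiteness because $U\in\O$ is bounded with $\bar U\subset B\in\O$, positivity because $U$ is a nonempty open set), Lipschitz continuity of $M$ with constant $\mathrm{Lip}(M)$ yields
$$|h(c_1,U)-h(c_2,U)|\,\omega(U)^{1/p} = \|M(c_11_U)-M(c_21_U)\|_p \le \mathrm{Lip}(M)\,\|c_11_U-c_21_U\|_p = \mathrm{Lip}(M)\,|c_1-c_2|\,\omega(U)^{1/p},$$
so $c\mapsto h(c,U)$ is $\mathrm{Lip}(M)$-Lipschitz. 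This finishes the proof.

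The main obstacle I anticipate is the first step: producing, for an arbitrary pair $p,q\in U$, a genuine element of $\DiffM$ that fixes $U$ setwise and maps $p$ to $q$ while being well-behaved (so that $L_\psi$ is bounded — here compact support handles that via Lemma \ref{compact-support}). One has to be a little careful that the chart-level translation sending the image of $p$ to the image of $q$ can be isotoped to the identity near $\partial U$ so that it extends by the identity to all of $\M$; this is where boundedness of $U$ and the ``security set'' in the definition of $\Op$ (controlling $\bar U\setminus U$ as a null set, so that the a.e.\ statements are unaffected) are used. A mild secondary point is the passage from ``$h\circ\psi = h$ for each $\psi$ in a family'' to ``$h$ constant'': it suffices to pick the family so that the orbits it generates are dense, e.g.\ translations by a dense subgroup of $\mathbb{R}^d$ in the chart, together with the fact that a measurable function invariant under a dense group of translations is a.e.\ constant.
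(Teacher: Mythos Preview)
Your plan is essentially the paper's argument, and your Lipschitz computation is exactly what the paper has in mind. The one place where the paper is cleaner is the construction of the diffeomorphism. Rather than working in the security set $B\supset\bar U$ and trying to realise a chart-level translation that is ``cut off outside $\bar U$'' while still preserving $U$ setwise, the paper simply invokes Lemma~\ref{permute} (transitivity of compactly supported diffeomorphisms on a connected open set, cf.\ also \cite{michor1994n}): since $U\in\Op$ is connected and diffeomorphic to $\mathbb{R}^d$, for any $m_0,m\in U$ there is $\phi\in\DiffM$ with $\operatorname{supp}(\phi)\subset U$ and $\phi(m_0)=m$. Because the support lies \emph{inside} $U$, one gets $\phi(U)=U$ and the extension to $\M$ by the identity for free; no security set, no cutoff, and no worry about preserving $U$ setwise. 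Then $L_\phi(c1_U)=c1_U$ and equivariance gives $M(c1_U)(m)=M(c1_U)(m_0)$, hence constancy on $U$.

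Your anticipated obstacle is real for the construction you sketched: a genuine translation of $\mathbb{R}^d\cong U$ is never the identity anywhere, so it cannot be ``cut off near $\partial U$'' without ceasing to be a translation, and then your dense-subgroup-of-translations ergodicity argument for a.e.\ constancy no longer applies verbatim. The fix is precisely to replace translations by the compactly supported point-pushing diffeomorphisms of Lemma~\ref{permute}; once you do that, your argument and the paper's coincide.
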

 
 At this stage, we note that both representations are point-wise, and the next steps of the proofs will be identical both for the scalar and vector cases. The extension to $\Lp$ or $\LpV$ will be done thanks to:
 \begin{lemma}[Image of a disjoint union of opensets]\label{disjoint}Let $U_1,...,U_n\in \O$ and $M$ as in Theorem \ref{main-thm-vector} or Theorem \ref{main-thm-scalar}, s.t. $\forall i\neq j, \overline{U_i}\cap \overline{U_j}=\emptyset$. Then for any $f\in \LpV$:
\[M[\sum_{i=1}^n1_{U_i}f]=\sum_{i=1}^nM[1_{U_i}f]\,.\]
\end{lemma}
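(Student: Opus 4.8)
The plan is to reduce the general case $n$ to the case $n=2$ by an immediate induction (grouping $U_2,\dots,U_n$ into a single open set is not possible since they need not be in $\O$, but one can instead iterate the two-set identity after noting $\overline{U_1} \cap \overline{U_2 \cup \dots \cup U_n} = \emptyset$ and that Lemma \ref{presheafO1} applies to closed sets, so localisation still works on the union even though it is not strongly convex). So I would first prove: if $U, V \in \O$ with $\overline{U} \cap \overline{V} = \emptyset$, then $M[1_U f + 1_V f] = M[1_U f] + M[1_V f]$ for all $f$. The key point is that the two bumps $1_U f$ and $1_V f$ can be ``separated at infinity'' by a diffeomorphism: since $\overline{U}$ and $\overline{V}$ are disjoint closed sets in a manifold, one can find $\phi \in \DiffM$ (even compactly supported, using that $\M$ is connected and the sets are bounded) that pushes $V$ far away from $U$, i.e. $\overline{U} \cap \overline{\phi(V)} = \emptyset$ and moreover $\phi$ is the identity on a neighbourhood of $\overline{U}$.

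Concretely, I would argue as follows. Write $f_1 = 1_U f$, $f_2 = 1_V f$, so the input is $f_1 + f_2$. Pick $\phi$ as above with $\phi|_{\overline U} = \mathrm{id}$; then $L_\phi(f_1) = f_1$ (since $L_\phi$ acts trivially where $\phi$ is the identity) and $L_\phi(f_1 + f_2) = f_1 + L_\phi f_2$. Applying equivariance, $M[f_1 + L_\phi f_2] = L_\phi M[f_1 + f_2]$. Now localise: by Lemma \ref{presheafO1} applied on $U$ (resp. on $\phi(V)$, resp. on a closed neighbourhood of $\overline U$ disjoint from $\overline{\phi(V)}$), the value of $M[f_1 + L_\phi f_2]$ on $U$ equals $M[f_1]$ restricted to $U$ — because on $U$ the input agrees with $f_1$ and $U$ is disjoint from the support of $L_\phi f_2 = 1_{\phi(V)}(\cdot)$ — and similarly its value on $\phi(V)$ equals $M[L_\phi f_2]$ there, while it vanishes off $\overline U \cup \overline{\phi(V)}$ (using $M(0)=0$ and Lemma \ref{presheafO1}). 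Hence $M[f_1 + L_\phi f_2] = 1_U M[f_1] + 1_{\phi(V)} M[L_\phi f_2] = M[f_1] + M[L_\phi f_2]$. Finally $M[L_\phi f_2] = L_\phi M[f_2]$ by equivariance, so $M[f_1 + L_\phi f_2] = M[f_1] + L_\phi M[f_2]$. Combining with $M[f_1 + L_\phi f_2] = L_\phi M[f_1 + f_2] = L_\phi(f_1 \text{-part}) + \dots$ — more cleanly, apply $L_{\phi}^{-1} = L_{\phi^{-1}}$ to both sides of $L_\phi M[f_1+f_2] = M[f_1] + L_\phi M[f_2]$, using again that $\phi^{-1}$ is the identity on $\overline U$ so $L_{\phi^{-1}} M[f_1] = M[f_1]$ (since $M[f_1]$ is supported in $\overline U$ by Lemma \ref{presheafO1}), to get $M[f_1 + f_2] = M[f_1] + M[f_2]$, which is exactly the two-set claim.

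The induction step is then routine: given $U_1, \dots, U_n$ pairwise disjoint (closures), set $g = \sum_{i=2}^n 1_{U_i} f$, which is supported in the closed set $K = \bigcup_{i\ge 2}\overline{U_i}$ disjoint from $\overline{U_1}$; the argument above (which only used that the two supports sit in disjoint closed sets, not strong convexity of the second one, and that Lemma \ref{presheafO1} holds for closed sets) gives $M[1_{U_1}f + g] = M[1_{U_1} f] + M[g]$, and $M[g] = \sum_{i=2}^n M[1_{U_i}f]$ by the induction hypothesis. The main obstacle is the first paragraph's geometric input: constructing a (compactly supported) diffeomorphism that simultaneously fixes a neighbourhood of $\overline{U}$ pointwise and moves $\overline{V}$ to be disjoint from $\overline{U}$. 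On a general connected manifold this follows from homogeneity/isotopy extension type arguments — connect a point of $V$ to a point far from $U$ by a path avoiding $\overline U$, flow along a compactly supported vector field — but the careful statement and the verification that $L_\phi$ is bounded (Lemma \ref{compact-support}) must be handled, and I would expect most of the technical length of the appendix proof to live there; everything after that is bookkeeping with Lemma \ref{presheafO1} and equivariance.
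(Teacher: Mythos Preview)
Your argument is essentially correct but substantially more elaborate than necessary. The diffeomorphism $\phi$ you devote the most effort to is not needed: the closures $\overline{U}$ and $\overline{V}$ are \emph{already} disjoint by hypothesis, so you may take $\phi=\mathrm{id}$, and every step you wrote then collapses to a direct application of the closed-set localization (Lemma~\ref{presheaf}). This is exactly what the paper does: with $g=\sum_i 1_{\bar U_i}f$, disjointness of the closures gives $1_{\bar U_j}g=1_{\bar U_j}f$, so Lemma~\ref{presheaf} yields $1_{\bar U_j}M[g]=M[1_{\bar U_j}g]=M[1_{\bar U_j}f]$; summing over $j$ and using that $M[g]=M[g\,1_{\cup_i\bar U_i}]=1_{\cup_i\bar U_i}M[g]$ is supported on $\cup_j\bar U_j$ gives $M[g]=\sum_j M[1_{\bar U_j}f]$, and one finishes by noting the boundaries have measure zero. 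No induction, no auxiliary diffeomorphism. Your expectation that ``most of the technical length of the appendix proof'' would go into constructing $\phi$ is therefore off the mark---no such construction appears. (A small slip in your write-up: with the paper's contravariant convention, $L_\phi(1_Vf)$ is supported on $\phi^{-1}(V)$, not $\phi(V)$; this becomes moot once $\phi=\mathrm{id}$.) Your detour buys nothing over the direct route, while costing the isotopy-extension argument you correctly flag as delicate but which the paper simply avoids.
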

This lemma states that we can completely characterize $M$ on disjoint union of simple sets. We will then need an argument similar to Vitali covering Lemma  in order to "glue" those open sets together, which shows that simple functions with disjoint support can approximate any elements of $\Lp$ or $\LpV$ (we only state the lemma for $\Lp$ as our proof on $\LpV$ does not necessarily need this result):
 \begin{lemma}[Local Vitali]\label{vitali}For $f\in \Cc$ and $m\in\M$, there exists  $U\in \Op$ with $m\in U$, such that for any $\epsilon>0$, there exist subsets $U_1,...,U_n\in\Op$ with $U_i\subset U$ and numbers $c_1,...,c_n\in \mathbb{R}$ such that:
$$\Vert \sum_n 1_{U_n}c_n-1_Uf\Vert <\epsilon\,.$$
\end{lemma}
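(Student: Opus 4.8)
\textbf{Proof proposal for Lemma \ref{vitali} (Local Vitali).}

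The plan is to first reduce to a local Euclidean picture, then perform a standard dyadic/Vitali-type covering of the region where $f$ is essentially supported, and finally control the approximation error using the uniform continuity of $f$. Since $f\in\Cc$, fix $m\in\M$ and choose, using that $\Op$ is a system of neighborhoods, an element $U\in\Op$ with $m\in U$; by the second part of the theorem adapted from \cite{gonnord1998calcul,gudmundsson2004introduction}, $U$ is diffeomorphic to $\mathbb{R}^d$, so via a chart $\kappa:U\to\mathbb{R}^d$ we may transport the problem to an open subset of $\mathbb{R}^d$, where $1_Uf$ becomes a (still uniformly continuous, bounded) function and the volume form $\omega$ becomes a density $h(x)\,dx$ with $h$ smooth and positive on the relevant region. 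It suffices to approximate in this chart, since the diffeomorphism $\kappa$ and its inverse send elements of $\Op$ contained in $U$ to open subsets with the analogous property, and the $L^p$ norms are comparable up to the local behaviour of $h$ (which we can absorb since everything happens inside a fixed compact piece).

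Next I would carry out the covering step. Because $f$ is continuous with compact support, it is uniformly continuous; given $\epsilon>0$ pick $\delta>0$ so that $|f(x)-f(y)|$ is small whenever the points are within $\delta$. Cover the (compact) essential support of $1_Uf$ inside the chart by finitely many small strongly convex open sets $U_1,\dots,U_n$ — concretely, small geodesic balls, which lie in $\O$, shrunk slightly so that their closures are pairwise disjoint and their union omits only a set of arbitrarily small $\omega$-measure (this uses that we can always discard a thin `grout' between the tiles; the omitted region has small measure, hence contributes little to the $L^p$ norm since $f$ is bounded). On each $U_i$ set $c_i$ to be the value of $f$ at the center (or any point) of $U_i$. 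Then $\Vert\sum_i 1_{U_i}c_i - 1_Uf\Vert_p^p$ splits into the contribution over $\bigcup_i U_i$, bounded by $\sup_i\sup_{U_i}|f-c_i|^p\cdot\omega(U)$, which is $\le (\text{osc})^p\,\omega(U)$ and small by the choice of $\delta$, plus the contribution over $U\setminus\bigcup_i U_i$, bounded by $\Vert f\Vert_\infty^p\cdot\omega(U\setminus\bigcup_i U_i)$, which is small by construction. Finally I must check that each $U_i$ can be taken in $\Op$ and not merely $\O$: a sufficiently small geodesic ball has a slightly larger concentric geodesic ball still in $\O$ containing its closure, and the boundary sphere has zero $\omega$-measure, so $U_i\in\Op$.

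The main obstacle is the geometric bookkeeping of the covering: one needs finitely many strongly convex sets whose closures are genuinely disjoint (so that Lemma \ref{disjoint} later applies) yet whose union captures all but an $\epsilon$-fraction of the mass of $1_Uf$, and simultaneously each such set must belong to $\Op$. The disjointness of closures forces one to leave small gaps, and the delicate point is to arrange, via a Vitali/Besicovitch-style selection of shrinking geodesic balls inside the chart, that these gaps have total $\omega$-measure below any prescribed threshold while the remaining balls still form a finite family with the required strong convexity and `security set' properties. Once the covering is in place, the error estimate is a routine two-term bound using boundedness and uniform continuity of $f$, and transporting back through the chart $\kappa$ is immediate since $\kappa$ preserves membership in $\O$ and $\Op$ among subsets of $U$.
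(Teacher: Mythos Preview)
Your approach is essentially the paper's: localize to a chart, perform a Vitali-type covering by small balls leaving a remainder of arbitrarily small measure, and split the $L^p$ error into an oscillation term on the balls (controlled by uniform continuity of $f$) plus a leftover term ($\Vert f\Vert_\infty$ times the small residual measure). The paper's only substantive refinement over your outline is that it takes the chart to be the exponential map $\exp_m$ rather than a generic diffeomorphism $\kappa$; it then applies the classical Euclidean Vitali covering lemma directly in $T_m\M\cong\mathbb{R}^d$, sets $U_i=\exp_m(\mathcal{B}(x_i,r_i))$, and uses the bi-Lipschitz property of $\exp_m$ on $U$ to compare $\omega$-volumes with Lebesgue volumes. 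One claim in your write-up is not correct and should be dropped: a generic chart $\kappa$ does \emph{not} ``send elements of $\Op$ contained in $U$ to open subsets with the analogous property,'' since strong convexity is defined via $g$-geodesics and an arbitrary diffeomorphism need not straighten them. This is harmless for your argument because you ultimately work with small geodesic balls in $\M$ anyway, and those are strongly convex (hence in $\O$, and with a slightly larger concentric ball witnessing membership in $\Op$) by Whitehead's theorem; but it is precisely the reason the paper chooses normal coordinates rather than an arbitrary chart.
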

 Note that this type of covering is not possible on any open set without further assumptions on the manifold, such as bounds on its Ricci curvature \cite{lott2009ricci}. Fortunately, we will only need a local version which is true because charts are locally bi-Lipschitz. Both Lemma \ref{disjoint} and Lemma \ref{vitali} imply that:

 \begin{proposition}\label{extension}
Consider $M$ from either Theorem \ref{main-thm-scalar} or \ref{main-thm-vector}. Assume that there exists $U\in \Op$ such that $M(c1_V)=h(c,V)1_V$ for any $V\subset U$, with $V\in\Op$, where  $c$ is either a  vector field in the case $E=\LpV$ or a constant scalar in the case $E=\Lp$. If we further assume that $c\to h(c,U)$ is $L$-Lipschitz, then
$$\forall f\in E,  \forall m\in \M, M[1_Uf](m)=1_Uh(f(m),U)\,.$$
Furthermore, it does not depend on $U$, meaning that for any other such $\tilde U$, we have:
$$\forall f\in E,  \forall m\in U\cap \tilde{U}, M[1_{\tilde U}f](m)=1_{ U} h(f(m),U)\,.$$
\end{proposition}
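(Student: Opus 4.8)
\textbf{Proof proposal for Proposition \ref{extension}.}

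The plan is to first establish the pointwise formula $M[1_Uf](m)=1_U h(f(m),U)$ for $f\in E$ by approximation, and then to deduce the independence on $U$ from the pre-sheaf property of Lemma \ref{presheafO1}. For the first part, I would proceed in two stages: simple functions, then general $f$. For a simple function of the form $f=\sum_{i=1}^n 1_{V_i}c_i$ with $V_i\subset U$, $V_i\in\Op$ pairwise with disjoint closures, Lemma \ref{disjoint} gives $M[1_Uf]=M[\sum_i 1_{V_i}c_i]=\sum_i M[1_{V_i}c_i]=\sum_i h(c_i,V_i)1_{V_i}$, where the last equality uses the hypothesis $M(c1_V)=h(c,V)1_V$. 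The subtlety is that $h(c_i,V_i)$ depends a priori on $V_i$, not just on $c_i$; so I would use Lemma \ref{presheafO1} (the pre-sheaf property) to show $h(c,V)$ is independent of $V$ for $V\subset U$: indeed for $V'\subset V\subset U$ in $\Op$, $M[c1_{V'}]=M[(c1_V)1_{V'}]=1_{V'}M[c1_V]=1_{V'}h(c,V)1_V=h(c,V)1_{V'}$, hence $h(c,V')=h(c,V)$, and since $\Op$ is stable by intersection (part (3) of the cited theorem) any two $V,V'\subset U$ are connected through $V\cap V'$, giving $h(c,V)=h(c,U)$ for all $V\subset U$ in $\Op$. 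Writing $\rho_U(c):=h(c,U)$, the simple-function computation becomes $M[1_Uf]=\sum_i \rho_U(c_i)1_{V_i}$, i.e. the claimed pointwise formula holds on simple functions supported in $U$.

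Next I would pass to a general $f\in E$ with $1_Uf$ in the relevant $L^p$ space. Using Lemma \ref{presheafO1} I can assume the support is inside $U$, and by density of $\Cc$ (resp. $\CcV$) together with Lemma \ref{vitali} (applied locally and patched, shrinking $U$ if necessary to a neighbourhood on which Vitali applies — which is legitimate since we just showed $h$ is independent of the set), there is a sequence of simple functions $f_k=\sum_i 1_{V_i^k}c_i^k$ with $\Vert 1_U f_k - 1_U f\Vert\to 0$. By the $L$-Lipschitz hypothesis on $c\mapsto h(c,U)=\rho_U(c)$ and the fact that the $V_i^k$ are disjoint, the maps $g\mapsto \sum_i \rho_U(c_i)1_{V_i}$ extend to an $L$-Lipschitz map on $L^p$, which must coincide with $M[1_U(\cdot)]$ (itself Lipschitz/continuous by hypothesis on $M$) since they agree on the dense set of simple functions. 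Passing to the limit and extracting an a.e.-convergent subsequence yields $M[1_Uf](m)=\rho_U(f(m))1_U(m)=1_U h(f(m),U)$ a.e., which is the first claim.

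For the independence statement, take another $\tilde U\in\Op$ with the same property and let $m\in U\cap\tilde U$. By Lemma \ref{presheafO1}, $1_{U\cap\tilde U}M[1_{\tilde U}f]=M[1_{U\cap\tilde U}f]=1_{U\cap\tilde U}M[1_Uf]$ (both equal $M$ applied to $1_{U\cap\tilde U}f$, using that $U\cap\tilde U\in\Op$ by stability under intersection), so evaluating at $m$ gives $M[1_{\tilde U}f](m)=M[1_Uf](m)=1_U h(f(m),U)$, as required. The main obstacle I anticipate is the patching in the density step: Lemma \ref{vitali} is only local (it produces a good neighbourhood $U$ of each point), so to approximate $1_Uf$ for a fixed $U$ one must cover $U$ by finitely many such Vitali-good sets, handle the overlaps, and control the $L^p$ error of the combined simple function — this is where one genuinely uses that $U\in\Op$ is bounded with controlled boundary ($\omega(\bar U\setminus U)=0$) so that a finite cover suffices and boundary contributions vanish. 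Everything else is bookkeeping with the pre-sheaf identity and Lipschitz continuity.
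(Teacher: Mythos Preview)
Your proposal is correct and follows essentially the same three-step route as the paper: (i) use the pre-sheaf identity (Lemma \ref{presheafO1}) to show $h(c,V)=h(c,U)$ for every $V\subset U$ in $\Op$; (ii) combine Lemma \ref{disjoint} with the Vitali-type approximation (Lemma \ref{vitali}) and the $L$-Lipschitz bound on $c\mapsto h(c,U)$ to pass from simple functions to general $f$; (iii) deduce independence on $U$ again from the pre-sheaf identity on $U\cap\tilde U$. Your treatment of step (iii) via $M[1_{U\cap\tilde U}f]$ is in fact slightly cleaner than the paper's, and the patching concern you flag in the density step is exactly the technical point the paper also has to manage.
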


 We briefly discuss the intuition behind Theorem \ref{main-thm-vector}. It is linked to the idea that the operators $M$ at hand have to commute with local rotations, and this even for locally constant vector fields. We reduce the characterisation of deformation-equivariant vector operators using an invariance to  symmetry argument:  functions which are invariant to rotations are multiples of a scalar. The intuition is contained in the following lemma, which is commonly used in physics:
\begin{lemma}[Invariance to rotation]\label{invariance-rotation}
Let $f:\mathbb{R}^d\to \mathbb{R}^d$ such that for any $W\in O_d(\mathbb{R})$ and $x\in \mathbb{R}^d$, one has $f(Wx)= W f(x)$. Then, there is $\lambda:\mathbb{R}^d\to \mathbb{R}, f(x)=\lambda(\Vert x\Vert) x$.
\end{lemma}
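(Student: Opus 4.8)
\textbf{Proof proposal for Lemma \ref{invariance-rotation}.}

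The plan is to exploit the transitivity of $O_d(\mathbb{R})$ on spheres of fixed radius to reduce everything to a single representative point on each sphere, and then to use the stabilizer of that point to pin down the direction of $f$. First I would treat the degenerate case $x = 0$: taking $W = -\mathrm{Id}$ (which is in $O_d(\mathbb{R})$ since $\det(-\mathrm{Id}) = (-1)^d$ is $\pm 1$, and orthogonality only requires $W^\top W = \mathrm{Id}$) gives $f(0) = f(W\cdot 0) = W f(0) = -f(0)$, hence $f(0) = 0$, which is consistent with any choice of $\lambda(0)$. For $x \neq 0$, write $r = \Vert x\Vert$ and fix the reference point $e = r\, e_1$ where $e_1$ is the first basis vector; since $O_d(\mathbb{R})$ acts transitively on the sphere of radius $r$, pick $W_0 \in O_d(\mathbb{R})$ with $W_0 e = x$, so that $f(x) = W_0 f(e)$.

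The key step is to show $f(e)$ is colinear with $e$. Consider the stabilizer subgroup $H = \{W \in O_d(\mathbb{R}) : W e_1 = e_1\}$, which is naturally identified with $O_{d-1}(\mathbb{R})$ acting on the orthogonal complement $e_1^\perp$. For every $W \in H$ we have $W e = e$, hence the equivariance hypothesis gives $W f(e) = f(We) = f(e)$, i.e. $f(e)$ is fixed by all of $H$. Decompose $f(e) = \alpha e_1 + v$ with $v \in e_1^\perp$; then $W f(e) = f(e)$ for all $W \in H$ forces $Wv = v$ for all $W \in O_{d-1}(\mathbb{R})$ acting on $e_1^\perp$. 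When $d \geq 2$, the only vector in $\mathbb{R}^{d-1}$ fixed by all of $O_{d-1}(\mathbb{R})$ is $0$ (e.g. apply a reflection negating one coordinate, or when $d-1 \geq 2$ note there is no nonzero invariant vector; for $d - 1 = 1$ the reflection $-\mathrm{Id}$ on the line does the job), so $v = 0$ and $f(e) = \alpha e_1$ for some scalar $\alpha = \alpha(r)$. The case $d = 1$ is trivial since every real number is a multiple of $e_1 = 1$. Hence $f(e) = \alpha(r)\, e_1 = \frac{\alpha(r)}{r} e$, and therefore $f(x) = W_0 f(e) = \frac{\alpha(r)}{r} W_0 e = \frac{\alpha(r)}{r} x$. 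Setting $\lambda(x) := \alpha(\Vert x\Vert)/\Vert x\Vert$ for $x \neq 0$ and $\lambda(0)$ arbitrary (say $0$) yields $f(x) = \lambda(\Vert x\Vert) x$ with $\lambda$ depending only on $\Vert x\Vert$.

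The only point requiring care — the ``main obstacle'', though it is mild — is checking that $\lambda$ is genuinely well-defined, i.e. that $\alpha(r)$ does not depend on the choice of $W_0$ with $W_0 e = x$. If $W_0 e = W_1 e = x$, then $W_1^{-1} W_0 \in H$, so $W_1^{-1} W_0 f(e) = f(e)$ by the stabilizer argument above, giving $W_0 f(e) = W_1 f(e)$; thus the value $f(x)$ is independent of the chosen $W_0$, and since $f(x) = \frac{\alpha(r)}{r}x$ with $x \neq 0$, the scalar $\frac{\alpha(r)}{r}$ is uniquely determined by $r = \Vert x\Vert$. One should also note that no continuity or measurability of $\lambda$ is claimed in the statement, so no further regularity argument is needed; the equivariance alone forces the pointwise colinearity, which is exactly what the subsequent use of this lemma requires.
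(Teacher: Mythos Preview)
Your proof is correct and follows essentially the same idea as the paper's: decompose $f$ at a point into a component along the base direction and a perpendicular remainder, then use an orthogonal transformation fixing the base direction but reflecting the remainder to force the latter to vanish, and finally use transitivity on spheres to see that the scalar factor depends only on $\Vert x\Vert$. The paper carries this out directly at each $x$ (writing $f(x)=\lambda(x)x+x^{\perp}$ and choosing $W$ with $Wx=x$, $Wx^{\perp}=-x^{\perp}$), whereas you first transport to a reference point $e=r e_1$ and invoke the full stabilizer $H\cong O_{d-1}(\mathbb{R})$; the content is the same, and your extra care with $x=0$ and well-definedness is fine but not strictly needed.
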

\begin{proof}
We write $f(x)=\lambda(x) x +x^{\perp}$, with $x^{\perp}(m)\neq0$ and $x^{\perp}\perp x$. Then, we introduce $W\in O_d(\mathbb{R})$ such that $Wx^{\perp}(m)=-x^{\perp}(m)$ and $Wx(m)=x(m)$. From $f(x)=f(Wx)=Wf(x)$ we deduce   that $x^{\perp}=0$. Next, $\lambda(Wx)=\lambda(x)$ thus $\lambda(x)=\lambda(x')$ for any $\Vert x\Vert=\Vert x'\Vert$.
\end{proof}

\paragraph{Distinction between scalar and vector case} The scalar case is simpler to handle than the vector case: there are several more steps for the proof of Theorem \ref{main-thm-vector}, one needs to show that the point-wise non-linearity is actually a scalar multiplication. We also highlight that the non-linearity is fully defined by its image on locally constant functions.


Finally, we conclude the proof of the theorem by appealing to a common density argument of the functions smooth with compact support, combing all the lemmata we have just presented in Sec. \ref{proof-conclusion}.

\subsection{Proofs conclusions (common to the scalar and vector case)}\label{proof-conclusion}
In this section, we prove that the local properties of $M$ can be extended globally on $\M$. The main idea is to exploit the well-known Poincaré's formula, which states that:
$$1_{\cup_i U_i}=\sum_{k=1}^n(-1)^k\sum_{i_1<...<i_k}1_{U_{i_1}\cap U_{i_2}\cap ...\cap U_{i_k}}\,,$$
and to localize the action of $M$ on each $U_{i_1}\cap U_{i_2}\cap ...\cap U_{i_k}\in \Op$  thanks to Lemma \ref{presheafO1}.

\begin{proof}[Proof of Theorem \ref{main-thm-scalar} and Theorem \ref{main-thm-vector}]
Let $f$ be a smooth and compactly supported function. Further consider $\cup_{i\leq n}U_i$ a finite covering of its support with $U_i\in \Op$. Using an inclusion-exclusion formula together with Lemma \ref{presheafO1}, we obtain
\begin{align*}
1_{\cup_i U_i}M[f]&=\sum_{k=1}^n(-1)^k\sum_{i_1<...<i_k}1_{U_{i_1}\cap U_{i_2}\cap ...\cap U_{i_k}}M[f]\\
&=\sum_{k=1}^n(-1)^k\sum_{i_1<...<i_k}M[f 1_{U_{i_1}\cap U_{i_2}\cap ...\cap U_{i_k}}]\,,
\end{align*}
where we used that $U_{i_1}\cap U_{i_2}\cap ...\cap U_{i_k}\in \Op$.  Now, the support of $f$ is closed and included in $\cup_iU_i$. Thus using Lemma \ref{presheafO1}:
$$M[f]=\sum_{k=1}^n(-1)^k\sum_{i_1<...<i_k}M[f 1_{U_{i_1}\cap U_{i_2}\cap ...\cap U_{i_k}}],$$

Note that if $\rho$ is a pointwise operator with $\rho(0)=0$, then $\rho(1_Uf)=1_U\rho(f)$ and

\begin{align}
 \sum_{k=1}^n(-1)^k\sum_{i_1<...<i_k}\rho(f 1_{U_{i_1}\cap U_{i_2}\cap ...\cap U_{i_k}})&=\sum_{k=1}^n(-1)^k\sum_{i_1<...<i_k} 1_{U_{i_1}\cap U_{i_2}\cap ...\cap U_{i_k}}\rho(f)\\
 &=1_{\cup_i U_i}\rho(f)=\rho(f)\,.
 \end{align}
 Thus, $Mf=\rho(f)$ where $\rho$ is obtained from Lemma \ref{vector-small} or \ref{scalar-small} combined with Prop \ref{extension}. We conclude by density in $\Lp$ or $\LpV$ respectively. This ends the proof.
\end{proof}

\section{Remarks and conclusion}
In this work, we have fully characterized non-linear operators which commute under the action of smooth deformations. In some sense, it settles the intuitive fact that commutation with the whole diffeomorphism group is too strong a property, leading to a small, nearly trivial family of \emph{non-linear} intrinsic operators. While on their own they have limited interest for geometric deep representation learning, they can `upgrade' any family of linear operators associated with any group $G \subset \DiffM$ into a powerful non-linear class --- the so-called GDL Blueprint in \cite{bronstein2021geometric}. Also, this result is a first step towards characterizing the non-linear operators which commute with Gauge transformations and could give useful insights for specifying novel Gauge invariant architectures. We now state a couple of unsolved questions and future work.

\paragraph{On the commutativity assumption:}  {\color{black} Several examples and approximation results \cite{kumagai2020universal}\cite{yarotsky2022universal} exist for operators that commute with Lie groups and discrete groups \cite{keriven2019universal}. In this case, it is possible to define a measure on the group that is invariant by the group action (called the Haar measure), which makes it possible to define convolutions. Roughly, non-linear operators covariant with some actions of those groups can be thought of as an approximation by a Group Convolution Neural Networks. It is important to note that the inputs of the operators described in these articles are functions that take real values; the much more general class of inputs that take values in vector bundles is, to our knowledge, not covered in the literature. To our knowledge, we are the first work to study the design of equivariant Neural Networks that process vector fields defined over a manifold. In this setting even for} $\M=\mathbb{R}^d$, it is unclear which type of non-linear operators commute with smaller groups of symmetry such as the Euclidean group. In fact, a generic question holds for manifolds: for a given symmetry group $G$, what is elementary non-linear building block of a Neural Network? This could be, for instance, useful to design Neural Networks which are Gauge invariant. It is an open question for future work which would be relevant many applications in physics~\cite{eickenberg2022wavelet}. {\color{black} Furthermore, the fact that the characterization of diffeomorphism invariant operators we exhibited in this paper is very restrictive opens the way for the study of other non-locally 'smaller' compact groups; we believe that any results in that direction are completely novel.}

\paragraph{Example of vector operators for $L^\infty$} It is slightly unclear how the vector case $p=\infty$ can be handled in our framework, yet \cite{baez1994diffeomorphism} seems to have interesting insights toward this direction. 

\paragraph{Linearization of $\DiffM$} In this work, we considered an exact commutation between operators and a symmetries: however, it is unclear which operators approximatively commute with a given symmetry group. Such operators would be better to linearize a high-dimensional symmetry group like $\DiffM$. An important instance of non-linear operators that are non-local and that `nearly' commute with diffeomorphisms is the Wavelet Scattering representation \cite{mallat2012group, bruna2013invariant,oyallon2017analyzing}.

\begin{ack}
EO was supported by the Project ANR-21-CE23-0030 ADONIS and EMERG-ADONIS from Alliance SU. GSP was also supported by France Relance and Median Technologies; he would like to thank very much NeurIPS Foundation for their financial support (NeurIPS 2022 Scholar Award).
\end{ack}

\bibliographystyle{plain}
\bibliography{references}

\section*{Checklist}

\begin{enumerate}

\item For all authors...
\begin{enumerate}
  \item Do the main claims made in the abstract and introduction accurately reflect the paper's contributions and scope?
    \answerYes{}
  \item Did you describe the limitations of your work?
    \answerYes{}
  \item Did you discuss any potential negative societal impacts of your work?
     \answerNA{}
  \item Have you read the ethics review guidelines and ensured that your paper conforms to them?
    \answerYes{}
\end{enumerate}

\item If you are including theoretical results...
\begin{enumerate}
  \item Did you state the full set of assumptions of all theoretical results?
    \answerYes{}
        \item Did you include complete proofs of all theoretical results?
    \answerYes{}
\end{enumerate}

\item If you ran experiments...
\begin{enumerate}
  \item Did you include the code, data, and instructions needed to reproduce the main experimental results (either in the supplemental material or as a URL)?
    \answerNA{}
  \item Did you specify all the training details (e.g., data splits, hyperparameters, how they were chosen)?
    \answerNA{}
        \item Did you report error bars (e.g., with respect to the random seed after running experiments multiple times)?
    \answerNA{}
        \item Did you include the total amount of compute and the type of resources used (e.g., type of GPUs, internal cluster, or cloud provider)?
    \answerNA{}
\end{enumerate}

\item If you are using existing assets (e.g., code, data, models) or curating/releasing new assets...
\begin{enumerate}
  \item If your work uses existing assets, did you cite the creators?
    \answerNA{}
  \item Did you mention the license of the assets?
    \answerNA{}
  \item Did you include any new assets either in the supplemental material or as a URL?
    \answerNA{}
  \item Did you discuss whether and how consent was obtained from people whose data you're using/curating?
    \answerNA{}
  \item Did you discuss whether the data you are using/curating contains personally identifiable information or offensive content?
    \answerNA{}
\end{enumerate}

\item If you used crowdsourcing or conducted research with human subjects...
\begin{enumerate}
  \item Did you include the full text of instructions given to participants and screenshots, if applicable?
    \answerNA{}
  \item Did you describe any potential participant risks, with links to Institutional Review Board (IRB) approvals, if applicable?
    \answerNA{}
  \item Did you include the estimated hourly wage paid to participants and the total amount spent on participant compensation?
    \answerNA{}
\end{enumerate}

\end{enumerate}

\newpage
\appendix
\section{Technical Lemmata}\label{appendix}
\begin{proof}[Proof of Lemma \ref{compact-support}]
We simply exihibit the proof for $E=L^2_\omega(\mathcal M,T\mathcal M)$. Indeed, let $f\in L^2_{\omega}(\mathcal{M},T\mathcal{M})$, then:
\begin{align}
\Vert L_\phi f\Vert^2&=\int g(L_\phi f,L_\phi f)d\omega\\
&=\int_{\text{supp}(\phi)}g(L_\phi f,L_\phi f)d\omega+\int_{\mathcal{M}\backslash\text{supp}(\phi)}g(L_\phi f,L_\phi f)d\omega\\
&=\int_{\phi(\text{supp}(\phi))}g(d\phi^{-1}.f,d\phi^{-1}.f)\det(J\phi^{-1})d\omega'+\int_{\mathcal{M}\backslash\text{supp}(\phi)}g(f,f)d\omega\\
&\leq \int_{\text{supp}(\phi)}g(f,f)\Vert d\phi^{-1}\Vert^2\det(J\phi^{-1})d\omega'+\Vert f\Vert^2\\
&\leq (\sup_{\omega \in \text{supp}(\phi)} \Vert d\phi^{-1}(\omega)\Vert^{2(d+1)}+1)\Vert f\Vert^2<\infty\\
\end{align}

Thus, $L_\phi$ is bounded.
\end{proof}

\subsection{A remark on the Flowbox theorem}\label{remark-flowblox}
Usually, the Flowbox Theorem (here Theorem \ref{flowbox}) is stated for a (often local) diffeomorphism. If $c(m)\neq 0, \tilde c(m)\neq 0$, then there exists $U,V$ and $\phi:U\to V$ a diffeomorphism such that $m\in U\cap V$ and $L_{\phi}(1_Uc)=1_V\tilde c$. However, we note that thanks to Theorem 4 of \cite{palais1960extending}, it is possible to find $\tilde U$  smaller such that there exists $\tilde \phi:\M\to\M$ which is a global diffeomorphism and $\forall m\in \tilde U, \tilde \phi(m)=\phi(m)$. In this case, $\tilde \phi, \tilde U$ and $\tilde V=\tilde \phi(\tilde U)$ are the candidates of our statement in Theorem \ref{flowbox}.   As this is quite technical and rather intuitive, we skipped this remark in the main paper.
\subsection{Spatial localization (common to the scalar and vector case)}\label{localisation}


We now explain how to localize our operator $M$. Equipped with Lemma \ref{key}, we can extend our contraction result on $\mathbb{R}^d$ to $\M$ as follow:

\begin{corollary}[Contraction of an openset]\label{contract}
For any $U\in \O$ and $W$ openset such that $\bar U\subset W\subset \mathcal{M}$, there exists $\phi_n$ supported on $W$ such that for any $f\in \LpV$:
$$L_{\phi_n}(1_Uf)\to 0\,.$$
\end{corollary}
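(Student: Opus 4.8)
The plan is to transport the Key Lemma (Lemma~\ref{key}), which lives on $\mathbb{R}^d$, to the manifold via a chart, and then use equivariance of $M$ together with the uniform control on $\|d\phi_n\|$ to force $L_{\phi_n}(1_U f)\to 0$ in $L^p$.

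First I would use that $U\in\O$ is diffeomorphic to $\mathbb{R}^d$ (item (2) of the cited theorem on $\O$), and more importantly that, since $\bar U$ is compact and $\bar U\subset W$, one can find a chart $\kappa$ defined on a neighborhood of $\bar U$ whose image contains a closed Euclidean ball $\overline{\Br{0}{1+\epsilon}}$ such that $\kappa^{-1}(\Br{0}{1})\subseteq U$ (after possibly rescaling and translating the chart), and such that $\kappa^{-1}(\Br{0}{1+\epsilon})\subset W$. Pulling back the diffeomorphisms $\phi_n$ from Lemma~\ref{key} through $\kappa$ gives diffeomorphisms $\psi_n$ of $\M$ that are the identity outside $\kappa^{-1}(\Br{0}{1+\epsilon})\subset W$ — hence compactly supported in $W$, hence $L_{\psi_n}$ bounded by Lemma~\ref{compact-support} — and which shrink $\kappa^{-1}(\Br{0}{1})$ down to a small set whose $\omega$-volume tends to $0$, while $\sup_{u}\|d\psi_n(u)\|$ on that region is controlled by $\tfrac1n$ times the (bounded, since $\bar U$ compact) operator norms of $d\kappa$ and $d\kappa^{-1}$.

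Next, for $f\in\LpV$, restrict attention to $f1_U$ and estimate $\|L_{\psi_n}(1_U f)\|_p$ directly by a change of variables, exactly as in the proof of Lemma~\ref{compact-support}: the integral is supported on $\psi_n^{-1}$ of (the part of $U$ inside the chart), the integrand carries a factor $g(d\psi_n^{-1}.f, d\psi_n^{-1}.f)^{p/2}$, and both the Jacobian determinant and the vector-norm distortion are bounded by a fixed power of $\sup\|d\psi_n^{-1}\|$... but here I must be slightly careful: Lemma~\ref{key} controls $\|d\phi_n\|$ (not its inverse), which is the right quantity because $L_{\psi_n}$ applied to a \emph{vector field} involves $d\psi_n^{-1}$ evaluated at the image point, and the change of variables converts this into an integral over $\psi_n(\text{region})$ with Jacobian $\det(J\psi_n)$ and norm factor $\|d\psi_n\|$. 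So the cleanest route is: write $\|L_{\psi_n}(1_Uf)\|_p^p = \int_{\psi_n^{-1}(U')} g(d\psi_n^{-1}f\circ\psi_n, \cdot)^{p/2}d\omega = \int_{U'} g_{\psi_n(y)}(d\psi_n(y)\,[\,\cdot\,],\cdot)\,\ldots$ — hmm, rather than the vector-field bound, the simplest is to split $U = A_n \cup (U\setminus A_n)$ where $A_n = \kappa^{-1}(\Br{0}{1})$. On $U\setminus A_n$, $\psi_n$ is not the identity, but on $A_n$ the map contracts. Actually the honest statement is: $L_{\psi_n}(1_U f)$ is supported on $\psi_n^{-1}(U)$, and on $\psi_n^{-1}(A_n)=A_n$ — no. Let me just say: by dominated convergence it suffices to show pointwise a.e. convergence plus a uniform $L^p$ bound, OR to directly bound $\|L_{\psi_n}(1_U f)\|_p \le C\|f 1_{U}\|_p$ uniformly (from compact support) and show the mass concentrates on $\psi_n^{-1}(A_n)$, a set shrinking to measure zero, with the norm distortion $\|d\psi_n\|\le C/n$ killing the contribution there. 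Combining: $\|L_{\psi_n}(1_Uf)\|_p \to 0$.

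I expect the main obstacle to be the bookkeeping of which Jacobian ($d\psi_n$ versus $d\psi_n^{-1}$) appears where, since for vector fields the action $L_\phi f = d\phi^{-1}.f\circ\phi$ mixes the differential at a point with evaluation of $f$ at its image; one has to perform the change of variables $y=\phi(x)$ carefully so that the quantity being controlled is exactly $\sup\|d\phi_n\|$ (which Lemma~\ref{key} bounds by $1/n$) and not its inverse (which blows up). Once that is set up correctly, the convergence is immediate: the support of $L_{\psi_n}(1_Uf)$ outside the shrinking region is handled by absolute continuity of the integral (the diffeomorphisms can be taken to move only the $A_n$ part, or more simply the non-contracted part has $\psi_n$ converging uniformly to a fixed bi-Lipschitz map and the remaining mass sits on a set of measure $\to 0$), and on the shrinking region the vector norms are crushed by the $1/n$ factor.
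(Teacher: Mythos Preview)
Your overall strategy---transport Lemma~\ref{key} to $\M$ through a chart and estimate $\|L_{\phi_n}(1_Uf)\|_p$ by change of variables---is exactly the paper's. But the execution has two concrete mistakes that make the argument fail as written.

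First, the inclusion is set up backwards. You arrange $\kappa^{-1}(\Br{0}{1})\subseteq U$ and are then forced to split $U=A\cup(U\setminus A)$ with $A=\kappa^{-1}(\Br{0}{1})$. The leftover piece $U\setminus A$ has \emph{fixed} positive measure; your $\psi_n$ is the identity there, so $L_{\psi_n}(1_{U\setminus A}f)=1_{U\setminus A}f$, and neither ``absolute continuity'' nor ``mass on a set of measure $\to 0$'' applies. The paper avoids this entirely by reversing the containment: since $\bar U$ is compact inside a strongly convex chart domain, one can pick a Euclidean ball $\mathcal{B}$ in the chart with $U\subset\mathcal{B}\subset W$, and then contracting $\mathcal{B}$ kills all of $1_Uf$ at once, with no remainder.

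Second, you use the contracting map itself rather than its inverse. With the paper's convention $L_\phi f(u)=d\phi(u)^{-1}f(\phi(u))$, the support of $L_\phi(1_Bf)$ is $\phi^{-1}(B)$ and the norm distortion is $\|d\phi(u)^{-1}\|$. If $\psi_n$ is the contracting map from Lemma~\ref{key}, then on the unit ball $d\psi_n^{-1}=nI$ and the change of variables you propose produces the Jacobian $|\det d\psi_n^{-1}|=n^d$, so $\|L_{\psi_n}(1_Bf)\|_p^p$ behaves like $n^{p}$ times a positive constant and blows up---the opposite of what you want. The paper takes $\phi_n^{-1}$ (not $\phi_n$) to be the Key Lemma map; then $\phi_n^{-1}(B)$ shrinks, and $d\phi_n(u)^{-1}=d(\phi_n^{-1})(\phi_n(u))$ is exactly the differential of the contracting map, bounded by $1/n$. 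After the substitution one gets $\|L_{\phi_n}(1_Bf)\|_p^p\le n^{-(d+p)}\|1_Bf\|_p^p$, and this also dissolves the Jacobian-bookkeeping worry you raise at the end. (A minor slip: ``equivariance of $M$'' plays no role here---the corollary is purely about $L_{\phi_n}$.)
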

\begin{proof}
We prove first the result for $U=\mathcal{B}(0,1)$ and $\bar U\subset W$. In this case, it is possible to find $\epsilon>0$ such that $\mathcal{B}(0,1+\epsilon)\subset W$. Now, taking $\phi_n^{-1}$ as in Lemma \ref{key}, we get:
\begin{align}
\int_{\mathbb{R}^d} \Vert L_{\phi_n}(1_\mathcal{B}(0,1)f)(u)\Vert^p\,du&=\int_{\mathbb{R}^d} \Vert 1_\mathcal{B}(0,1)(\phi_n^{-1}(u))d\phi_n(u).f(\phi_n^{-1}(u))\Vert^p\,du\\
&=\int_{\mathbb{R}^d}\Vert 1_{\mathcal{B}(0,\frac1n)}(u)d\phi_n(u).f(nu)\Vert^p\,du\\
&=\frac{1}{n^d}\int_{\mathbb{R}^d}1_{\mathcal{B}(0,1)}(u)\Vert d\phi_n(\frac un).f(u)\Vert^p \,du\\
&\leq \frac{1}{n^{d+1}}\Vert 1_{\mathcal{B}(0,1)} f\Vert^p\to 0
\end{align}
Next, getting back to the manifold, we know that if $U\in \Op$, there is $V\in \O$ such that $\bar U\subset V$. We can thus find an openset $\mathcal{B}\subset V$, such that in the chart of $V$, $\mathcal{B}$ is an open ball, and $U\subset \mathcal{B}\subset W$. We can thus apply the technique derived above to get $\phi_n:V\to V$, compactly supported, which contracts $\mathcal{B}$(and thus $U$) to 0 and supported in $W$. Since it is smooth, compactly supported on $W$, we can extend it on $\M$ and we get the result.
\end{proof}

Next, this technique can be used to build a sequence of contraction, which allows to explicitly localize the image of a compactly supported function, as follow:

\begin{lemma}[ Lemma \ref{presheafO1} restated for closed sets]\label{presheaf}
Let $F\subset \mathcal{M}$ a closed set. Then, for any $f\in \Lp$, we have:
$$M[f1_F]=1_FM[f]$$
\end{lemma}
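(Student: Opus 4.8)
The plan is to prove Lemma~\ref{presheaf} — that $M[f1_F] = 1_F M[f]$ for any closed set $F \subseteq \M$ — by first establishing the statement for $U \in \Op$ (the strongly convex, ``securely bounded'' open sets) and then passing to general closed sets by a complementation and approximation argument. The intuition throughout is that a diffeomorphism supported away from a set $U$ acts trivially on functions supported in $U$: if $\mathrm{supp}(\phi) \cap U = \emptyset$, then $L_\phi(1_U f) = 1_U f$, so equivariance forces $M$ to ``not see'' what happens inside such a $\phi$.

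First I would prove the inclusion $1_U M[f1_U] = M[f1_U]$, i.e. that $M[f1_U]$ is supported in $\overline{U}$ (and in fact in $U$ up to null sets, using $U \in \Op$ so that $\omega(\overline U \setminus U) = 0$). The mechanism is Corollary~\ref{contract}: since $U \in \Op$, pick a security set $V \in \O$ with $\overline U \subset V$; then for any point $m \notin \overline U$ I can find a small ball $\mathcal{B}$ around $m$ with $\overline{\mathcal B} \subset V \setminus \overline U$, and a sequence $\phi_n$ supported in a neighbourhood of $\overline{\mathcal B}$ (hence disjoint from $U$) contracting $\mathcal B$ to a point, so that $L_{\phi_n}(1_{\mathcal B} \, M[f1_U]) \to 0$. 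But $L_{\phi_n}$ fixes $1_U f$ (disjoint supports), hence $L_{\phi_n} M[f1_U] = M[L_{\phi_n}(f1_U)] = M[f1_U]$, so $1_{\mathcal B} M[f1_U] = L_{\phi_n}(1_{\mathcal B} M[f 1_U]) \to 0$, forcing $1_{\mathcal B} M[f1_U] = 0$. Covering $\M \setminus \overline U$ by countably many such balls gives $M[f1_U] = 1_{\overline U} M[f1_U] = 1_U M[f1_U]$ a.e.

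Next, apply the same reasoning with the roles reversed: for the complement, I want $1_U M[f] = 1_U M[f1_U]$, equivalently $1_U M[(1-1_U)f] = 0$ when $M(0)=0$ — wait, more cleanly: write $f = f1_U + f1_{\M\setminus \overline U} + (\text{null})$ and note $f1_{\M\setminus\overline U}$ is supported in the open set $\M \setminus \overline U$. Applying the first paragraph's argument with a contraction supported away from $\overline U$, acting on $1_U M[f]$: any $\phi$ supported inside $\M \setminus \overline{U}$ fixes $1_{\overline{U}} f$, and one shows $1_U M[g]$ depends only on $1_{\overline U} g$ by localizing to small balls inside $U$ and contracting them — but those contractions must be supported inside $U$, which they can be. Concretely: for $m \in U$, take a small ball $\mathcal B \ni m$ with $\overline{\mathcal B} \subset U$, contract it via $\phi_n$ supported in $U$; since $L_{\phi_n}$ fixes $f1_{\M\setminus \overline U}$, and $L_{\phi_n}(f1_U) \to$ (something supported near $m$), one deduces $1_{\mathcal B} M[f]$ is determined by $1_U f$ alone. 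Combining both directions gives $M[f1_U] = 1_U M[f]$ for $U \in \Op$, which is exactly the open-set case of Lemma~\ref{presheafO1}.

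Finally, to reach an arbitrary closed set $F$: since $\M$ is second-countable, write its open complement $\M \setminus F = \bigcup_i U_i$ as a countable union of sets in $\Op$ (a system of neighbourhoods). By inclusion–exclusion / Poincaré's formula applied to finite subcollections together with the $\Op$ case just proved, $M[f1_{\bigcup_{i\le n} U_i}] = 1_{\bigcup_{i \le n} U_i} M[f]$, and letting $n \to \infty$ with dominated convergence (using continuity of $M$ and $\|1_{\bigcup_{i\le n}U_i} f - 1_{\M\setminus F} f\|_p \to 0$) yields $M[f1_{\M\setminus F}] = 1_{\M \setminus F} M[f]$. Then writing $f1_F = f - f1_{\M\setminus F}$ and using $M(0) = 0$ together with the decomposition argument once more gives $M[f1_F] = 1_F M[f]$. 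The main obstacle I anticipate is the careful bookkeeping in the second direction — ensuring the contracting diffeomorphisms can simultaneously be taken compactly supported inside the relevant small ball while the chart structure of $\O$-sets makes the Euclidean estimate of Corollary~\ref{contract} transfer — and the passage to the limit over infinite covers, where one must invoke continuity of $M$ and the density/second-countability of $\M$ rather than a naive finite Poincaré formula.
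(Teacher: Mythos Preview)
Your overall architecture --- localize on $\Op$, extend to finite unions by inclusion--exclusion, then pass to closed sets --- is reasonable, but the contraction argument in Step~1 is applied to the wrong set and the displayed equality does not hold. From $L_{\phi_n}(f1_U)=f1_U$ you correctly get $L_{\phi_n}g=g$ for $g:=M[f1_U]$; but then $L_{\phi_n}(1_{\mathcal B}g)(u)=1_{\mathcal B}(\phi_n(u))\,g(\phi_n(u))=1_{\phi_n^{-1}(\mathcal B)}(u)\,g(u)$, and for the contracting family of Corollary~\ref{contract} the sets $\phi_n^{-1}(\mathcal B)$ shrink to a point. So $L_{\phi_n}(1_{\mathcal B}g)\to 0$ is just $\|1_{\phi_n^{-1}(\mathcal B)}g\|_p\to 0$, which is automatic and says nothing about $1_{\mathcal B}g$. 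The fix is to contract $U$, not $\mathcal B$: take $\phi_n$ from Corollary~\ref{contract} supported in a neighbourhood $W$ of $\overline U$ with $W\cap\mathcal B=\emptyset$; then $\phi_n$ is the identity on $\mathcal B$, hence $1_{\mathcal B}M[f1_U]=1_{\mathcal B}L_{\phi_n}M[f1_U]=1_{\mathcal B}M[L_{\phi_n}(f1_U)]\to 1_{\mathcal B}M[0]=0$.

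The second step --- showing $1_U M[f]=1_U M[f1_U]$ --- is the substantive part, and your sketch does not supply a working mechanism: contracting a ball $\mathcal B\subset U$ leaves $f1_{\M\setminus\overline U}$ fixed, but it also moves $f1_U$ in a way that does not decouple the two pieces, and the sentence ``$L_{\phi_n}(f1_U)\to$ (something supported near $m$)'' is not what Corollary~\ref{contract} gives. Likewise, the final passage ``write $f1_F=f-f1_{\M\setminus F}$ and use $M(0)=0$'' is not valid as stated because $M$ is nonlinear. The paper avoids all of this by reversing the order: it proves the closed case \emph{first}, via Lemma~\ref{union-ball}, which shows $1_F M\big[(1_F+1_{\cup_{i\le n}U_i})f\big]=1_F M[1_Ff]$ by contracting each $U_i\in\Op$ (supported away from $F$) one at a time; letting the $U_i$ exhaust $F^c$ and using continuity of $M$ then gives $1_F M[f]=1_F M[1_Ff]$ directly. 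The $\Op$ case is then a corollary, via $\omega(\overline U\setminus U)=0$. In short, the key idea you need --- ``contract the part of the input you want to discard, with support disjoint from the region where you are looking'' --- is exactly the content of Lemma~\ref{union-ball}, and your Step~2 is essentially a restatement of that lemma without its proof.
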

\begin{proof}
Because $\mathcal{M}$ is a manifold, it is second countable and thus there is a countable collection of opens such that $\mathcal{M}\backslash F=\cup_{i\geq 0}U_i$ with $U_i\in \O$. We use Lemma \ref{union-ball} and, we apply the dominated convergence theorem to $f_n=1_{\cup_{i\leq n}U_i}f$ to conclude.
\end{proof}
\begin{proof}[Proof of Lemma \ref{presheafO1}]
We note that if $U\in \Op$, then $\omega(\bar U\backslash U)=0$ and we can thus use the Corollary \ref{contract}  to conclude.
\end{proof}

\subsection{Action on locally constant functions, for the scalar and vector cases}
We now prove the part specific to the vector field setting, i.e., that the action of $M$ is locally a multiplication by a scalar.

\begin{proof}[Proof of Lemma \ref{vector-small}]
\textbf{Step 1:}$M(1_Uc)(m)=1_V\lambda(m,U,c)c$ such that $c(m)\neq 0, \forall m\in U$.

Let $c\in C_c^\infty(\mathcal M, T\mathcal M)$. For $U\in \Op$, $m_0\in U$, fix a chart $\psi:U\to \mathbb{R}^d$,  $\psi(m_0)=0$ and $c$ is  constant in $\psi$ denoted $c^\psi\in \mathbb{R}^d$, which is possible thanks to the Theorem \ref{flowbox}. This can also be written as for $m$ in a neighborhood of $m_0$: $$d\psi(m).c(m)=c^\psi\,.$$ Following the strategy in Lemma \ref{invariance-rotation},there is $W\in \mathcal O_d$ such that $Wc^\psi= c^\psi$ and $Wv= -v$ for any vector $v$ orthogonal to $c^\psi$.   By compacity, we can find $A$ an open set small enough, with boundary of measure 0,  such that $0\in A$, and $\mathcal{W}A\subset \psi(U)$ for any $\mathcal{W}\in \mathcal{O}_d$. Now, setting $\tilde\phi=\psi^{-1}\circ W\circ \psi$, which is well defined on the open $\cup_{\mathcal{W}\in \mathcal{O}_d}\mathcal{W} A$, using Theorem 4 of \cite{palais1960extending}(see remark Sec. \ref{remark-flowblox} of the appendix), we can can extend $\phi$  globally such that on a local neighborhood, $\forall m\in \tilde U, \phi(m)=\tilde \phi(m)$. Now, up to taking $A$ even smaller, we can use: $V=\overline{\psi^{-1}(\cup_{n\in \mathbb{Z}}W^nA)}\subset U$, which is closed with a measure 0 boundary(we have a countable union).  We get:
\begin{align}
    L_{\phi}(1_{V}c)(m_0)&=[d\psi^{-1}(m_0)\circ W\circ d\psi(m_0)]c( m_0)1_{V}\\
    &=1_{V}c(m_0)\,.
\end{align}

Let us denote $p_{c^\psi}^\perp$ the orthogonal projection (with respect to the Euclidean scalar product) on the orthogonal plane to $c^\psi$.

As $V\subset U$, $V$ is closed and $U\in \Op$ from Lemma \ref{presheaf}, we know that:
$$M(c)(m_0)=M(1_U c)(m_0)=M(1_V c)(m_0)=\lambda(m_0,c,U)d\psi^{-1}(0)c^\psi +d\psi^{-1}(0)p_{c^\psi}^\perp M(1_Vc)(m_0)$$
Yet, on the other hand:
\begin{align}
    L_{\phi}M(1_Vc)(m_0)&=\lambda(m_0,c,U)d\psi^{-1}(0)c_\psi -d\psi^{-1}(0)p_{c^\psi}^\perp M(1_Vc)(m_0)
\end{align}

As this is true for any $m_0$, we thus proved that:
$$M(1_U c)(m)=1_U\lambda(m,U,c)c$$

\textbf{Step 2:} In fact, $\lambda(m,c,U)=\lambda(m,U)$ if $c$ does not cancel on $U$ and $m\in U$.

Let $c,\tilde c$ be two vector fields as above and defined on $U$ both not equal to 0, and $m\in U$. Using the Theorem \ref{flowbox} combined with the remark of Sec. \ref{remark-flowblox} of the appendix, there exists $\phi:\M\to \M$ a diffeomorphism and $\tilde V, V\subset U$ and $m\in \tilde V\cap V$, such that $L_\phi (1_{ V}c(m))=1_{\tilde V}\tilde c(m)$ and $\phi(m)=m$ 
Now, we could take a smaller closed set $V\subset U$ with measure 0 boundary, so that $M[1_{V}c](m)=M[1_{U}c](m)=M[c](m)$, which would lead to, following a similar argument to above:
$$\lambda(m,\tilde c,U)\tilde c(m)=M[1_{\tilde V}\tilde c(m)]=L_{\phi}M[1_{ V} c](m)=L_{\phi}(\lambda(.,c,U)c)(m)=\lambda(m,c,U)\tilde c(m)$$
and then locally $\lambda$ is independent of the choice of a vector field, which implies the desired property.  

\textbf{Step 3:} In fact, $\lambda(m,U)=\lambda(U)$.
Indeed, let $m,m_0\in V$ and $\phi\in\DiffM$ such that $\phi(m)=m_0$ (as $V$ is connex, by using Lemma \ref{permute}). Now, along the same line as above:
$$\lambda(m,U)=\lambda(m_0,U)$$

The previous results hold when the vector field can be locally straightened, however the vector fields that take value $0$ on some points of $U$ can not be straightened. We will now show that vector fields that can be straightened on $U\in \Op$ are dense dense in $C^\infty(U, T  U)$ for the $L^p_\omega$ norm. Let $f\in C^\infty(U, T U)$, let $A=\{x\in U\vert f(x)=0\}$, and $A^\epsilon=\{x\in U\vert \Vert f(x)\Vert \leq \epsilon\}$ for $\epsilon >0$. By Urysohn's lemma there is $\chi^\epsilon: U\to \mathbb R$ be such that $\chi|_{A^\epsilon}=1$ and $\chi|_{U\setminus A^{2\epsilon}}=0$. Let, 

$$f^\epsilon= f + 2\epsilon \chi^\epsilon $$

For any $x\in U$,

\begin{equation}
\Vert f^\epsilon(x) \Vert \geq \vert \Vert f(x)\Vert - 2\epsilon \chi^\epsilon(x) \vert 
\end{equation}

and by construction $\vert \Vert f(x)\Vert - 2\epsilon \chi^\epsilon(x) \vert >0$.

Therefore, 

\begin{equation}
M[f^\epsilon 1_U]= \lambda(U) f^\epsilon
\end{equation}

Furthermore for all $0<\epsilon \leq 1$, $\Vert f^\epsilon\Vert$ is bounded by $\Vert f \Vert +2$ that is integrable, so by dominated convergence theorem, $f^\epsilon \overset{L^p_\omega}{\underset{\epsilon \to 0}{\longrightarrow}} f$. So, $M[f 1_U]= \lambda(U) f$.

To end the proof, one remarks that $C_c^\infty(\mathcal M, T\mathcal M)$ is dense in $L^p_\omega(\mathcal M,T\mathcal M)$.

\end{proof}

\if False
\begin{lemma}\label{scalar}
Let $m_0\in\M$. There exists a local parametrization $\psi:U\to \mathbb{R}^d$ with $m_0\in U$ such that for any $V\subset U$, with $U\in \Op,V\in \Op$, and for any $c\in \LpV$ such that $\psi(c)$ is a constant vector, then there exists a scalar $\lambda(V)$ such that:
$$M(1_Vc)=1_V\lambda(V) c$$

Let $m_0\in M$, let $m_0\in U\in \Op$ and $c\in \LpV$. Assume that there is a chart $\psi :U\hookrightarrow \mathcal B(0,1)$ such that $c^\psi$ is constant. Then there is a neighborhood $m_0\in U_1$ of $m_0$ included in $U$ and a function $\lambda: \mathcal M\to \mathbb R$, such that, 

\begin{equation}
M[1_{U_1} c]= \lambda c 1_{U_1}
\end{equation}
\end{lemma}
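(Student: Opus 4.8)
The idea is to manufacture, out of the constant local model $c^\psi$ of the vector field, a diffeomorphism of $\M$ that acts near $m_0$ as a Euclidean reflection fixing $c^\psi$, and then to run the one-line projection argument of Lemma~\ref{invariance-rotation} through the equivariance $ML_\phi=L_\phi M$. As elsewhere in the paper I would first reduce to $M(0)=0$; then the degenerate case $c^\psi=0$ (where $c\equiv 0$ on $U$) is trivial, so assume $c^\psi\neq 0$, which makes $c(m)=d\psi(m)^{-1}c^\psi$ smooth and nowhere zero on $U$.

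Fix once and for all $W\in O_d(\mathbb{R})$ with $Wc^\psi=c^\psi$ and $Wv=-v$ for every $v\perp c^\psi$ (so $W=W^{-1}$). For a point $m_1\in U$, conjugate the affine isometry $R_{m_1}\colon x\mapsto \psi(m_1)+W(x-\psi(m_1))$ by the chart $\psi$; since $W$ preserves Euclidean norms, $\psi^{-1}\circ R_{m_1}\circ\psi$ maps a small ball around $m_1$ onto itself. By the Palais extension recalled in Section~\ref{remark-flowblox} (after \cite{palais1960extending}) I would upgrade this to a global $\phi_{m_1}\in\DiffM$ agreeing with it on a neighbourhood $\tilde U_{m_1}\ni m_1$, $\overline{\tilde U_{m_1}}\subset U$, and then select a closed $V_{m_1}$ with $m_1\in\mathrm{int}(V_{m_1})$, $V_{m_1}\subset U$, $\omega(\partial V_{m_1})=0$ and $\phi_{m_1}(V_{m_1})=V_{m_1}$ — for instance the closure of $A\cup\phi_{m_1}(A)$ for a small open $A\ni m_1$ with $\omega$-null boundary.

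Because $c^\psi$ is constant in the chart and fixed by $W$, a direct computation gives $L_{\phi_{m_1}}(1_{V_{m_1}}c)=1_{V_{m_1}}c$; applying $M$ and equivariance, $M[1_{V_{m_1}}c]=L_{\phi_{m_1}}M[1_{V_{m_1}}c]$. Writing $d\psi(m_1)\,M[1_{V_{m_1}}c](m_1)=\alpha\,c^\psi+v$ with $v\perp c^\psi$, and reading this identity at the fixed point $m_1$ — where $L_{\phi_{m_1}}$ acts on the fibre as $d\psi(m_1)^{-1}\circ W\circ d\psi(m_1)$ — forces $v=-v$, hence $v=0$ and $M[1_{V_{m_1}}c](m_1)=\lambda(m_1)c(m_1)$ for a scalar $\lambda(m_1)$. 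By Lemma~\ref{presheaf} applied to the closed set $V_{m_1}\subset U$ one has $M[1_{V_{m_1}}c]=1_{V_{m_1}}M[1_Uc]$, so in fact $M[1_Uc](m_1)=\lambda(m_1)c(m_1)$. Taking $m_1=m_0$ produces an $\Op$-neighbourhood $U_1$ of $m_0$ inside $\mathrm{int}(V_{m_0})\subset U$; letting $m_1$ range over $U_1$ shows $M[1_Uc]=\lambda c$ on $U_1$. Extending $\lambda$ by $0$ off $U_1$ and applying Lemma~\ref{presheafO1} once more, $M[1_{U_1}c]=1_{U_1}M[1_Uc]=\lambda c\,1_{U_1}$, which is the claim.

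I expect the real work to be geometric, not algebraic: the two delicate points are (i) turning the obvious \emph{local} reflection into a genuine element of $\DiffM$ while retaining a small invariant neighbourhood, which is exactly what the Palais extension provides, and (ii) arranging the auxiliary set $V_{m_1}$ to be at once a neighbourhood of $m_1$, $\phi_{m_1}$-invariant, contained in $U$, and with $\omega$-negligible boundary, so that Lemma~\ref{presheaf} may legitimately identify $M[1_{V_{m_1}}c]$ with $M[1_Uc]$ near $m_1$. Once these are in place, the conclusion is just the orthogonality argument of Lemma~\ref{invariance-rotation} combined with the localisation lemmas already established.
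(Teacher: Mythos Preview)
Your reflection-plus-Palais argument is correct and is essentially the paper's Step~1: conjugate the orthogonal reflection $W$ fixing $c^\psi$ through the chart, globalise it via the Palais extension, read the equivariance $M[1_{V_{m_1}}c]=L_{\phi_{m_1}}M[1_{V_{m_1}}c]$ at the fixed point $m_1$ to kill the component orthogonal to $c(m_1)$, and use Lemma~\ref{presheaf} to identify $M[1_{V_{m_1}}c]$ with $M[1_Uc]$ near $m_1$. This yields the second displayed conclusion, $M[1_{U_1}c]=\lambda\,c\,1_{U_1}$ with $\lambda$ a \emph{function} on $\M$, and your handling of the two delicate geometric points (extension to $\DiffM$; construction of a $\phi_{m_1}$-invariant closed set with null boundary, here $\overline{A\cup\phi_{m_1}(A)}$ using $W^2=\mathrm{id}$) mirrors the paper's construction of $V=\overline{\psi^{-1}(\cup_n W^nA)}$.

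What is missing is the first clause of the lemma, which asserts a \emph{scalar} $\lambda(V)$ --- constant across $V$ and independent of the chosen constant field $c$ --- for every $V\subset U$ in $\Op$. Your pointwise $\lambda(m_1)$ still depends a~priori on both $m_1$ and on $c$. The paper closes this in two further moves. It shows $\lambda$ is independent of $m$ by exploiting the transformation law $\lambda(L_\phi c,m)=\lambda(c,\phi^{-1}(m))$ together with the orbit structure of the stabiliser of $c^\psi$ in $O_d(\mathbb{R})$ (equivalently, the transitivity argument of Lemma~\ref{permute}). It then removes the dependence on $c$ via a second application of the Flowbox theorem: any other admissible $\tilde c$ is straightened onto $c$ by some $\phi$ fixing $m$, and comparing $M[1_{\tilde V}\tilde c]=L_\phi M[1_{\bar V}c]$ forces $\lambda(\tilde c,V)=\lambda(c,V)$. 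Without these two steps you have established only the weaker second formulation, not the full statement.
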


\begin{proof}

Let us first remark that for bounded open $\mathcal O\subseteq  GL_n(\mathbb R)$, there is an an open $U_1^\psi\subseteq \mathcal B (0,1)$ such that , 

\begin{equation}
\forall \phi \in \mathcal O, \quad \phi(U_1)\subseteq \mathcal B(0,1)
\end{equation}

Such isomorphisms of $\mathbb R^n$ can be extended to global diffeomorphism of $M$, by applying the extension theorem of \cite{Palais}, as $\psi^{-1}:U_1^\psi\hookrightarrow M$ and $\psi^{-1}: \mathcal B(0,1)\hookrigharrow M$. 

Indeed, for the standard operator norm on $M_n(\mathbb R)$, if $\mathcal O$ is bounded there is $\mathcal B(0,r)$ such that $\mathcal O \subseteq \mathcal B(0,r)$; let $U_1^\psi= \mathbb B(0,\frac{1}{r})$  then, for any $v\in U_1^\psi$, $\Vert \phi v\Vert \leq r \Vert v\Vert\leq 1$.\\

As the norms induced by $g$ for all $m\in \mathcal M$ are uniformly equivalent to the euclidean norm on $\mathbb R^n$ the previous result replacing the standard operator norm on $\mathbb{R}^n$ by the one defined by $g_m$ on each fibers.

\textbf{Step 1:}$M(1_Vc)=1_V\lambda(c,V)c$

Fix a chart $\psi:U\to \mathbb{R}^d$,$U\in \Op$ where $c$  is the constant vector field in $\psi$ denoted $c^\psi\in \mathbb{R}^d$, which is possible thanks to the Theorem \ref{flowbox}. This can also be written as $1_V(\psi^{-1}(x))c(\psi^{-1}(x))=1_V(\psi^{-1}(x))c^\psi$ for $x\in \mathbb{B}_{\mathbb{R}^d}(0,1)$. Its support is $V$ which becomes $V^\psi= \mathcal B(0,1)$ in chart $\psi$. For any vector $v$ orthogonal to $c^\psi$, there is $W\in \mathcal O$ such that $Wc^\psi= c^\psi$ and $Wv= -rv$ with $r>0$. We fix $\bar m\in V$. Then, we can globally extend $\phi(m)=\psi^{-1}(\psi(\bar m)+W(\psi(m)-\psi(\bar m))$ around $\bar m$. It's clear that $d\phi$ is conjugated to $W$ and thus, for $\bar V\subset V$, an invariant neighborhood by $\phi$:
\begin{align}
    L_{\phi}(1_Vc)(\bar m)&=1_V(\bar m)[d\psi(\bar m)\circ W\circ d\psi^{-1}(\bar m)]c(\bar m)
\end{align}
We can thus design $W$ which cancels all other component than $c^\psi$.

Therefore,

$$M[L_\phi[1_{V^\psi}c^\psi]](m)=L_\phi M[1_{V^\psi}c^\psi](m)$$

and:

$$\langle L_{\phi}M[1_{V^\psi}c^\psi](m),v\rangle=-r\langle M[1_{\tilde U}c](m),c^\perp\rangle=\langle M[1_{\tilde U}c](m),c^\perp\rangle$$

Therefore, 

$$M[1_{V^\psi}c^\psi](m)=1_{V^\psi}(m)\lambda^\psi(m,c^\psi) c^\psi$$

Let us show that the previous equation holds in any chart as both expressions are covariant: in an other chart the left hand side becomes $M[1_{V^\psi_1}c^\psi_1](m)$ and the right hand side

\begin{equation}
\lambda^\psi(\psi_1\psi^{-1}m)d\psi_1\psi^{-1} c^\psi(\psi_1\psi^{-1}m)=  \lambda^{\psi_1}(m) c^{\psi_1}(m)
\end{equation}

In other words we have shown that when the vector field is trivializable there is a function $\lambda: \mathcal M \to \mathbb R$ such that , 

\begin{equation}
M[c]= \lambda(c) c
\end{equation}

Then for any diffeomorphism 
$\phi$, $L_\phi c$ is also trivializable which implies that

\begin{equation}
M[L_\phi c]= \lambda(L_\phi, m) L_\phi c = \lambda(c,\phi^{-1}(m)) L_\phi c
\end{equation}

Therefore, 

\begin{equation}
\lambda(L_\phi c, m)=\lambda(c,\phi^{-1}(m))
\end{equation}

The $W\in \mathcal{O}\subseteq GL_n(\mathbb R^n)$ that leave $c$ invariant in the trivializing chart of $c$ have the particularity to have as obits all vectors colinear to $c$ and we denote the complementary of the line generated by $<c>$ in $U_1$ as $U_1 \setminus <c>$; this big orbit $U_1 \setminus <c>$ is in particular dense in $U_1$ Therefore for any $m,m^{'} \in V \setminus <c>$ 

\begin{equation}
\lambda(c,m)=\lambda(c,m') 
\end{equation}

( as $m^{'}= \phi^{-1}(m)$ for some $\phi$). Therefore $\lambda(c)$ is a constant almost surely.

\textbf{Step 2:} In fact, $\lambda(f,V)=\tilde\lambda(V)$.

Let $c,\tilde c$ two vector fields as above and defined on $V$ both not equal to 0, by the Theorem \ref{flowbox}, there exists $\phi:U\to U$, $\tilde V,\bar V\subset \mathcal{M}$ a local diffeomorphism such that $L_\phi (1_{\bar V}c(m))=1_{\tilde V}\tilde c(m)$ and $\phi(m)=m$ for some $m\in V$. Using Corollary \ref{extension-M}, we can extend $\phi$ into a global diffeomorphism, and consequently:
$$\lambda(\tilde c,V)\tilde c(m)=M[1_{\tilde V}\tilde c(m)]=L_{\phi}M[1_{\bar V} c(m)]=L_{\phi}(\lambda(c,V)c)(m)=\lambda(c,V)\tilde c(m)$$
and we get the result.
\end{proof}
\fi 
The next Lemma shows that, in the scalar case, we can consider $\tilde Mf\triangleq Mf-M(0)$ for $f\in \Lp$ without losing in generality.

\begin{lemma}
Under the assumptions of Theorem \ref{main-thm-scalar}, $M(0)$ is constant, and if $\omega(\M)=\infty$, then $M(0)=0$. 
\end{lemma}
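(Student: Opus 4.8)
The plan is to exploit the fact that the zero function is fixed by every operator $L_\phi$, so $M(0)$ must be fixed by $L_\phi$ as well, for every $\phi \in \DiffM$; then I argue that the only functions in $\Lp$ invariant under all such pullbacks are the constants, and that the only integrable-to-the-$p$ constant on an infinite-volume manifold is $0$. First I would observe that $L_\phi(0) = 0$ for all $\phi$, so the commutation relation $M L_\phi = L_\phi M$ gives $L_\phi M(0) = M L_\phi (0) = M(0)$; hence $g \triangleq M(0)$ satisfies $g \circ \phi = g$ (a.e.) for every diffeomorphism $\phi$.

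Next I would show that such a $g$ is a.e. constant. Since $\M$ is connected, any two points $m_0, m_1$ can be joined, and one can find a diffeomorphism $\phi \in \DiffMc$ (indeed one supported in a small chart, cf. the transitivity used elsewhere in the paper, e.g.\ Lemma~\ref{permute}) with $\phi(m_0)=m_1$; invariance then forces $g$ to take the same value near $m_0$ and near $m_1$. To make this rigorous at the level of $L^p$ functions rather than pointwise, I would instead argue via averages: for $U \in \Op$ and $\phi$ supported in $U$, $g1_U$ is invariant under $L_\phi$, and using the key contraction-type constructions (Lemma~\ref{key}, Corollary~\ref{contract}) together with diffeomorphisms that move small balls around inside $U$, the local average of $g$ over small balls is independent of the ball; by the Lebesgue differentiation theorem (valid locally since charts are bi-Lipschitz) $g$ is a.e.\ equal to a constant $c$ on $U$, and then a connectedness/covering argument propagates the same constant over all of $\M$.

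Finally, if $\omega(\M)=\infty$, the constant function $c$ lies in $\Lp$ only when $\int_\M |c|^p\, d\omega = |c|^p\,\omega(\M) < \infty$, which forces $c=0$; hence $M(0)=0$ in that case. The main obstacle is the middle step: passing from ``$g$ is invariant under all $L_\phi$'' to ``$g$ is a.e.\ constant'' cleanly in the $L^p$ setting without circular use of the theorem being proved. I expect to handle it by the local-averaging argument above, relying only on the already-established Lemma~\ref{presheafO1}, the contraction corollary, and transitivity of $\DiffMc$ on points of a connected manifold, none of which depend on the conclusion of Theorem~\ref{main-thm-scalar}.
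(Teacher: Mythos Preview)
Your approach is essentially the paper's: use $L_\phi(0)=0$ and commutation to get $L_\phi M(0)=M(0)$ for all $\phi$, then transitivity of $\DiffM$ on the connected $\M$ to obtain constancy, and finally integrability in $\Lp$ to force $M(0)=0$ when $\omega(\M)=\infty$. The only difference is that the paper argues pointwise in one line (pick $\phi$ with $\phi(m)=m_0$ and write $M(0)(m)=L_\phi M(0)(m)=M(0)(m_0)$), whereas you take extra care with the $L^p$ setting via local averages and Lebesgue differentiation---a legitimate but heavier route to the same conclusion.
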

\begin{proof}
Following the Theorem 1 of \cite{michor1994n}, for any $m,m_0\in\M$, we can find $\phi$ global diffeomorphism such that $\phi(m)=m_0$. We note that $L_\phi(0)=0$ and thus for any $m\in \M$:
$$M(0)(m)=M[L_\phi (0)]=L_\phi M(0)(m)=M(0)(m_0)$$
Thus, $M(0)$ is constant, and if $\omega(\M)=\infty$, it is necessary that $M(0)=0$.
\end{proof}

The corresponding Lemma in the scalar case is substantially simpler, as strongly convex sets are connex:

\begin{proof}[Proof of Lemma \ref{scalar-small}]
Fix $m_0\in V$, and let $m\in V$, using Lemma \ref{permute}(because $V\in\Op$ is connex, we can apply a connexity argument or the transitivity argument of Theorem 1 of \cite{michor1994n} for compactly supported diffeomorphisms), we can find $\phi$  supported in $V$ such that $\phi(m_0)=m$. Thus, $L_\phi f=f$ and $Mf(m_0)=ML_\phi f(m_0)=L_\phi Mf(m_0)=Mf(m)$. Thus, $M(c1_V)=h(c,V)1_V$. The Lipschitz aspect is inherited from the fact that $M$ is Lipschitz.
\end{proof}

\subsection{Extrapolation to any good open sets (common to the scalar and vector case)}
In this section, we use the fact that we want to prove that both scalar and vector operators correspond to point-wise non-linearity, which are locally Lipschitz due to the regularity assumptions that we used.
\begin{proof}[Proof of Proposition \ref{extension}]

\textbf{Step 1:Fix $c$, for any $m\in U$ such that $V\subset U$, then $h(c,U)(m)=h(c,V)(m)$}

Indeed, we note that for $m\in U$, where we used Lemma \ref{presheafO1}:
$$M(1_Vf)(m)=1_V(m)M(f)(m)=1_U(m)M(f)(m)=M(1_U f)(m)$$
Thus, $h(c,V)|_V=h(c,U)|_V$ for any $V\subset U$.

\textbf{Step 2: extension by density, for any $f$, $M(f1_U)=1_Uh(f,U)$ for any $f\in \Lp$}.
Using Lemma \ref{vitali}, consider $f\in\mathcal{C}^\infty_c(E)$, $f_n=\sum_n 1_{U_n}c_n$, where $c_n$ is either a constant scalar, either a vector field, with disjoint support such that $\Vert 1_Uf-1_Uf_n\Vert<\epsilon$.

We know that, from Lemma \ref{disjoint} that:

$$M(1_Uf_n)=M[\sum_n 1_{U_n}c_n]=\sum_n 1_{U_n}M[1_{U_n}c_n]=\sum_n 1_{U_n} h(c_n,U)$$
Next, we note that:

\begin{align}
\Vert M1_Uf-1_Uh(f,U)\Vert&\leq \Vert 1_U(Mf_n-Mf)\Vert+\Vert 1_UMf_n-1_U h(f_n,U)\Vert\\
&+\Vert 1_U(h(f_n,U)-h(f,U))\Vert\\
&\leq 2L\Vert 1_U(f_n-f)\Vert
\end{align}

and from this, given that $h(.,U)$ is $L$-Lipschitz, we conclude by density of $\Cc$ in $\Lp$.

\textbf{Step 3: Independence from $U$}

Step 1 allows for the following definition of a global $h$ from local $h_U$: let $m\in \mathcal M$, pose,

\begin{equation}
\forall U\in \Op \quad h(f(m)):= h(f(m),U) 
\end{equation}

In the scalar case and in the vector case, one can build a scalar function and vector function such that, $f(m)= \mu\in \mathbb R$ or $f(m)= c \in T_x M$ (as shown in Step 3 of proof of \ref{vector-small}). Therefore in the scalar case $h$ is a function from $\mathbb R$ to $\mathbb R$ and in the vector case for any $x\in \mathcal M$ and $v\in T_x\mathcal M$, $h(x)=\lambda x$.

\end{proof}

We only prove the Vitali version for $\Lp$, as the proof for $\LpV$ would be identical, replacing solely the scalar by constant vector fields in their local parametrization.

\begin{proof}[Proof of Lemma \ref{vitali}]
We consider $U$ small enough such that $U\in \Op$, $m\in U$ and $\exp_m:\mathcal{B}\to U$ is locally a diffeomorphism from $\mathcal{B}\subset T\M_m$, and let $U_i=\exp_m(\mathcal{B}_i)$ with $\mathcal{B}(x_i,r_i)\subset \mathcal{B}$, which is strongly convex and thus $U_i\in \dot{\mathcal{O}}_1$. We remind that $\exp_m$ is bi-Lipschitz on the bounded set $U$. In this case, there is $C_1,C_2>0$ such that for any $x_i,r_i$ with $\mathcal{B}(x_i,r_i)\subset \mathcal{B}$, we have $r_i^d\leq \lambda(\mathcal{B}(x_i,r_i))\leq C_1\omega(U_i)\leq  C_2\lambda(\mathcal{B}(x_i,r_i))\leq C_dr^d$. By Vitali's lemma, we have for any $\epsilon>0$ and  $r>0$, the existence of some $x_i,r_i<r$:
$$\Vert 1_\mathcal{B}-\sum_{i=1}^n1_{\mathcal{B}(x_i,r_i)}\Vert^p\leq \epsilon^p$$
For $f$ smooth, let:
\begin{align}
\Vert f(x)1_U-\sum_{i=1}^nf(x_i)1_{U_i}\Vert^p&\leq \Vert\sum_{i=1}^n (f(x)-f(x_i))1_{U_i}\Vert^p+\Vert 1_{U\backslash(\cup_i U_i)}f(x)\Vert^p
\end{align}

Now, as $\exp_m$ is  bi-Lipschitz, we get a $r$ small enough such that $|f(x)-f(x_i)|<\epsilon$. Next, because the sets are disjoint:
\begin{align}
\Vert\sum_{i=1}^n (f(x)-f(x_i))1_{U_i}\Vert^p&=\sum_{i=1}^n \int_{U_i}|f(x)-f(x_i)|^p\\
&\leq \sum_{i=1}^n \omega(U_i)\epsilon^p\\
&\leq \epsilon^p\omega(U))\,.
\end{align}
Now, using $|f(x)|\leq \Vert f\Vert_\infty$, we get:
$$\Vert 1_{U\backslash(\cup_i U_i)}f(x)\Vert^p\leq \Vert f\Vert_\infty \epsilon^p$$
And: $$\Vert f-\sum_{i=1}^n f(x_i)1_{U_i}\Vert<(1+\omega(U))^{1/p}\epsilon\,.$$
\end{proof}
The following Lemma allows to build diffeomorphism with compact support - we give this proof for the sake of completeness, at it is proved in \cite{michor1994n}.
\begin{lemma}\label{permute}
 Fix $\rho>0$, and $x_0,x_1\in \mathcal{B}(0,\rho)$, there exists $\phi$ diffeomorphism, such that $\phi(x_0)=x_1$ and $\text{supp}(\phi)\subset  \mathcal{B}(0,\rho)$.
\end{lemma}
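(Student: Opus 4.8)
The plan is to construct $\phi$ explicitly as a time-one flow of a suitably chosen compactly supported vector field on $\mathbb{R}^d$, following the standard "move a point by a bump-function-supported translation" argument.

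First I would reduce to a convenient normalization: after a dilation we may as well take $\rho = 1$, and after composing with an affine isometry fixing $\mathcal{B}(0,1)$ we may assume $x_1 - x_0$ points in the direction $e_1$, so it suffices to connect two points lying on a single straight segment inside $\mathcal{B}(0,1)$. Then pick a smooth cutoff $\chi:\mathbb{R}^d\to[0,1]$ that equals $1$ on a neighborhood of the segment $[x_0,x_1]$ and is compactly supported in $\mathcal{B}(0,1)$; such a $\chi$ exists because the segment is a compact subset of the open ball. Consider the vector field $X(x) = \chi(x)\, v$ where $v = x_1 - x_0$, which is smooth and compactly supported in $\mathcal{B}(0,1)$, hence complete, so its flow $\Phi_t$ is a well-defined one-parameter family of diffeomorphisms of $\mathbb{R}^d$ with $\operatorname{supp}(\Phi_t)\subset\operatorname{supp}(\chi)\subset\mathcal{B}(0,1)$ for all $t$.

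The key point is that along the segment where $\chi\equiv 1$ the flow is just uniform translation: as long as the trajectory starting at $x_0$ stays in $\{\chi = 1\}$, it satisfies $\frac{d}{dt}\Phi_t(x_0) = v$, so $\Phi_t(x_0) = x_0 + t v$. Choosing the neighborhood $\{\chi=1\}$ to contain the whole segment $[x_0,x_1]$ guarantees the trajectory never leaves this region for $t\in[0,1]$, hence $\Phi_1(x_0) = x_0 + v = x_1$. Setting $\phi = \Phi_1$ then gives a diffeomorphism with $\phi(x_0) = x_1$ and $\operatorname{supp}(\phi)\subset\mathcal{B}(0,1)$, and transporting back through the affine normalization yields the claim for general $\rho$ and general $x_0, x_1\in\mathcal{B}(0,\rho)$.

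The main obstacle — really the only place requiring care — is ensuring the cutoff can be chosen so that $\{\chi = 1\}$ is an open set containing the closed segment while $\operatorname{supp}(\chi)$ stays strictly inside the open ball; this is a routine application of smooth Urysohn functions once one notes $\operatorname{dist}([x_0,x_1],\,\partial\mathcal{B}(0,\rho)) > 0$ since the segment is a compact subset of the convex open ball. Everything else (completeness of compactly supported vector fields, smooth dependence of the flow, support control) is standard ODE theory, so I would only sketch it.
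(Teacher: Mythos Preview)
Your argument is correct. It is, however, a genuinely different construction from the paper's. The paper does not build a single global diffeomorphism: it fixes $x_0$, sets $\Gamma=\{x\in\mathcal{B}(0,\rho):\exists\,\phi\text{ diffeomorphism with }\phi(x)=x_0,\ \operatorname{supp}(\phi)\subset\mathcal{B}(0,\rho)\}$, and proves $\Gamma=\mathcal{B}(0,\rho)$ by a connectedness (open-and-closed) argument. Openness is obtained by writing down a local ``bump translation'' $x\mapsto x-\tau(x)$ with $\tau(x)=(x_2-x_1)\,f(\Vert x-x_1\Vert^2/\eta^2)$ and checking that $\Vert\partial\tau\Vert<1$, so $I-\tau$ is a diffeomorphism moving $x_2$ to $x_1$; composition with the previous $\phi$ enlarges $\Gamma$.

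Your flow-of-a-cutoff-translation approach is more direct and gives $\phi$ in one shot. It relies on convexity of the ball to guarantee the segment $[x_0,x_1]$ sits inside, which is exactly the regime of the lemma. The paper's connectedness scheme is slightly more general in spirit (it would work verbatim in any connected open set, not just a convex one) but is less explicit and needs the extra step of showing $\Gamma$ is closed. Two minor remarks on your write-up: the rotation aligning $x_1-x_0$ with $e_1$ is never used afterward (the field $X=\chi\,v$ works for arbitrary $v$), so you can drop that reduction; and when you quote that $\{\chi=1\}$ contains the segment, make sure you phrase it as ``$\chi\equiv 1$ on an \emph{open} neighborhood $U\supset[x_0,x_1]$'' so the bootstrapping argument for the trajectory staying inside is clean.
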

\begin{proof}
Consider $f$, smooth, supported in $[-1,1]$ and such that $f(0)=1$. We will use a connexity argument: let us fix $x_0\in  \mathcal{B}(0,\rho)$. Let's consider $\Gamma=\{x \in  \mathcal{B}(0,\rho): \exists \phi\text{ diffeomorphism  }\phi(x)=x_0, \text{ supp}(\phi)\subset  \mathcal{B}(0,\rho)\}$. Let $x_1\in \Gamma$, then there is $\eta<\frac 12$, $\mathcal{B}(x_1,\eta)\subset \mathcal{B}(0,\rho)$. For $x_2$ such that $\Vert x_1-x_2\Vert\leq \frac{\eta}{4\sup |f'|}$, we introduce:

\[\tau(x)=(x_2-x_1)f(\frac{\Vert x-x_1\Vert^2}{\eta^2})\,.\]
We have that $\text{supp}(\mathbf{I}-\tau)\subset \mathcal{B}(x_1,\eta)$, and:
\[\frac{\partial \tau}{\partial x}(x)=2\frac{(x_2-x_1)\langle x-x_1,x_1\rangle}{\eta^2} f'(\frac{\Vert x-x_1\Vert^2}{\eta^2})\]
leading to:
\[\Vert\frac{\partial \tau}{\partial x}(x)\Vert< \frac 12\]
This implies that the spectrum of $\partial \tau$ is in $[0,1[$ and thus, $\mathbf{I}-\partial \tau$ is invertible. Now, by assumption, we know there is $\phi$ such that $\phi(x_1)=x_0$, compactly supported in $\Omega$. Introducing $\phi'=\phi\circ (\mathbf{I}-\tau)$, then $\phi'$ is a diffeomorphism, compactly supported in $\Omega$ and $\phi'(x_2)=\phi(x_1)=x$, thus $x_2\in \Gamma$. This shows $\Gamma$ is open. But also $\Gamma$ is closed (otherwiwe, we can make a path ...). Thus, by connexity $\Gamma=\Omega$.
\end{proof}

The next Lemma is crucial in our proof, and allows to characterize union of well behaving opensets:

\begin{lemma}\label{union-ball}Let $n\geq 0$, $\{U_i\}_{i\leq n}\subset \Op$ and $F$ a closed set such that $\bar U_i\cap F=\emptyset, \forall i$. Then for any $f\in \LpV$:
$$1_FM[(1_F+1_{\cup_{i\leq n}U_i})f]=1_FM[1_Ff]$$
\end{lemma}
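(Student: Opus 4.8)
The plan is to collapse the part of the input supported on $\bigcup_{i\le n}U_i$ --- which is disjoint from $F$ --- while leaving the part on $F$ untouched, and then pass to the limit using the commutation relation and the continuity of $M$. I would first record the elementary fact that if $\phi\in\DiffMc$ with $\text{supp}(\phi)\cap F=\emptyset$, then $\phi$ is the identity on a neighbourhood of $F$ (hence $d\phi(u)=\mathrm{id}$ for $u\in F$) and $\phi$ maps $\M\setminus F$ onto itself; reading off the formula $L_\phi w(u)=d\phi(u)^{-1}w(\phi(u))$ pointwise on $F$ then gives $L_\phi(1_Ff)=1_Ff$ and $1_FL_\phi w=1_Fw$ for every $w\in\LpV$.

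The core step is to build a sequence $\Phi_m\in\DiffMc$ with $\text{supp}(\Phi_m)\subseteq\M\setminus F$ such that $L_{\Phi_m}(1_{\cup_i U_i}g)\to 0$ in $\LpV$ for all $g$. For a single $\Op$-set this is exactly Corollary \ref{contract} with ambient open set $\M\setminus F$ (admissible because $\overline{U_i}\cap F=\emptyset$, so $\overline{U_i}\subset\M\setminus F$). For a finite union, let $K:=\bigcup_{i\le n}\overline{U_i}$: it is compact and contained in $\M\setminus F$, so it meets only finitely many connected components of $\M\setminus F$, and in each such component $C$ the piece $K\cap C$ is compact. Invoking the flexibility of $\DiffMc$ (isotopy/extension arguments as in \cite{palais1960extending,michor1994n}), one pushes $K\cap C$ into a single strongly convex chart domain $V_C\in\O$ with $\overline{V_C}\subset C$; composing the disjointly supported diffeomorphisms so obtained yields $\Psi\in\DiffMc$ supported in $\M\setminus F$ with $\Psi(K)\subseteq\bigsqcup_C V_C$, the $\overline{V_C}$ being pairwise disjoint. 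Contracting each $V_C$ by Corollary \ref{contract} and composing (disjoint supports) gives $\phi_m\in\DiffMc$ supported in $\M\setminus F$ with $L_{\phi_m}(1_{\sqcup_C V_C}h)=\sum_C L_{\phi_m}(1_{V_C}h)\to 0$ for all $h$. Since $L_{\Psi^{-1}}(1_{\cup_iU_i}g)$ is supported in $\Psi(K)\subseteq\bigsqcup_C V_C$, setting $\Phi_m:=\Psi^{-1}\circ\phi_m$ (so $L_{\Phi_m}=L_{\phi_m}L_{\Psi^{-1}}$ by contravariance) gives $L_{\Phi_m}(1_{\cup_iU_i}g)\to 0$, while $\text{supp}(\Phi_m)\subseteq\text{supp}(\Psi^{-1})\cup\text{supp}(\phi_m)\subseteq\M\setminus F$.

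The conclusion then follows. Using $ML_{\Phi_m}=L_{\Phi_m}M$ and $1_FL_{\Phi_m}w=1_Fw$ one gets, for every $m$,
$$1_FM\big[L_{\Phi_m}\big((1_F+1_{\cup_iU_i})f\big)\big]=1_FL_{\Phi_m}M\big[(1_F+1_{\cup_iU_i})f\big]=1_FM\big[(1_F+1_{\cup_iU_i})f\big].$$
On the other hand $L_{\Phi_m}\big((1_F+1_{\cup_iU_i})f\big)=1_Ff+L_{\Phi_m}(1_{\cup_iU_i}f)\to 1_Ff$, so by continuity of $M$ and of $w\mapsto 1_Fw$ the left-hand side converges to $1_FM[1_Ff]$; hence $1_FM[(1_F+1_{\cup_iU_i})f]=1_FM[1_Ff]$. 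No normalization of $M$ is needed, only its continuity and the commutation relation.

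The main obstacle is the enclosing step: moving the compact set $\bigcup_i\overline{U_i}$, component by component over $\M\setminus F$, into strongly convex chart domains by a diffeomorphism supported away from $F$. This is the only place where the global topology of $\M$ could interfere and where the isotopy-extension machinery is genuinely used; the contraction estimate is already packaged in Corollary \ref{contract}, and the remaining steps (behaviour of $L_\phi$ on its fixed locus and the limit passage) are routine.
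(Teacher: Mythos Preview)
Your argument has a genuine gap precisely at the step you yourself flag as the main obstacle. The claim that $K\cap C$ can be pushed by a compactly supported diffeomorphism of $C$ into a \emph{single} strongly convex chart domain $V_C\in\O$ is false in general: diffeomorphisms preserve homotopy classes, and every $V_C\in\O$ is contractible, so if $K\cap C$ carries a loop that is non-trivial in $C$ then no image $\Psi(K\cap C)$ can lie in any $V_C$. Concretely, take $\M=S^1\times\mathbb{R}$ with the flat metric, let $F$ be a small closed geodesic ball around $(\theta_0,10)$, and let $U_1,\dots,U_4\in\Op$ be geodesic balls of radius $\pi/3$ centred at $(0,0),(\pi/2,0),(\pi,0),(3\pi/2,0)$; these satisfy $\overline{U_i}\cap F=\emptyset$ and their union contains the essential loop $S^1\times\{0\}$. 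Since that loop is non-trivial in $\M$ it is non-trivial in $C=\M\setminus F$, and no diffeomorphism of $C$ can carry $K=\bigcup_i\overline{U_i}$ into a contractible set. The results of Palais and Michor you cite give extension of local diffeomorphisms and $n$-transitivity on points; they do not circumvent this homotopy obstruction.

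The paper avoids the issue entirely by arguing inductively on $n$ rather than collapsing the whole union at once. At the step from $n$ to $n+1$ one takes a thin neighbourhood $U_{n+1}^\epsilon\supset\overline{U_{n+1}}$ with $U_{n+1}^\epsilon\cap F=\emptyset$, contracts only $U_{n+1}$ inside $U_{n+1}^\epsilon$ via Corollary~\ref{contract}, and then applies the induction hypothesis with $f$ replaced by $1_{\M\setminus U_{n+1}^\epsilon}f$ (observing that $1_F\cdot 1_{\M\setminus U_{n+1}^\epsilon}f=1_Ff$). The point is that the contraction is localized in $U_{n+1}^\epsilon$ and can therefore overlap the other $U_i$'s; this is handled not by a global enclosing move, but by absorbing the overlap into the modified function and invoking the hypothesis at level $n$. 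Your one-shot collapse would be cleaner if it were available, but in the presence of non-trivial topology of $\M\setminus F$ the induction is what makes the argument go through.
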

\begin{proof}
We work by induction on $n$. For $n=0$, the result is true. Then, let's write $U_{n+1}^\epsilon=\{x,d(U_{n+1},x)<\epsilon\}$. It's an openset which contains $\bar U_{n+1}$, and by assumption we can pick $\epsilon$ small enough such that $U_{n+1}^\epsilon \cap F=\emptyset$. Next, let's apply {\color{black}Corollary} \ref{contract} to $U_{n+1}$ and $W=U_{n+1}^\epsilon$. Then:
\begin{align}
1_FM[(1_F+1_{(\cup_{i\leq n}U_i\backslash U_{n+1}^\epsilon)\cup  U_{n+1}})f]&=L_{ {\color{black}\phi_n}}1_FM[(1_F+1_{(\cup_{i\leq n}U_i\backslash U_{n+1}^\epsilon)\cup  U_{n+1}})f]\\
&=1_FM[L_{\phi_n}(1_Ff +1_{(\cup_{i\leq n}U_i\backslash U_{n+1}^\epsilon)\cup  U_{n+1}}f)]\\
&\to 1_FM[1_Ff+1_{(\cup_{i\leq n}U_i\backslash U_{n+1}^\epsilon)}f]
\end{align}
Now, we remark that:
\begin{align}
    1_FM[1_Ff+1_{(\cup_{i\leq n}U_i\backslash U_{n+1}^\epsilon)}f]=1_FM[1_Ff+1_{\cup_{i\leq n}U_i}(1_{\mathcal{M}\backslash U_{n+1}^\epsilon}f)]
\end{align}
And we apply the induction hypothesis to $(1_{\mathcal{M}\backslash U_{n+1}^\epsilon}f)$.
\end{proof}
The next Lemma is crucial in our proof, and allows to characterize disjoint union of well behaving opensets:
\begin{proof}[Proof of lemma \ref{disjoint}]
We note that $\cup_{i=1}^n\overline{U_i}=\overline{\cup_{i=1}^nU_i}$. Thus, using Lemma \ref{presheaf}, given this union is closed and disjoint and  as for any closed set $F$,
\begin{equation}
M[f 1_F]1_{F^c}= M[0]1_{F^c} =0
\end{equation}

the following linearity property holds,

$$
M[\sum_{i=1}^n1_{\bar U_i}f]=\sum_{i=1}^n1_{\bar U_i}M[f]=\sum_{i=1}^nM[1_{\bar U_i}f]$$
Now, we conclude as the boundaries have measure 0.
\end{proof}

\end{document}